\setlist{nolistsep, leftmargin=*}
\begin{document}

\title{Argumentation for Explainable Scheduling\\
(Full Paper with Proofs)}
\author{Kristijonas \v Cyras\and Dimitrios Letsios\and Ruth Misener\and Francesca Toni \\
Imperial College London, London, UK}

\nocopyright

\maketitle

\begin{abstract}
Mathematical optimization offers highly-effective tools for finding solutions for problems with well-defined goals, notably scheduling. 
However, optimization solvers are often unexplainable black boxes
whose solutions are inaccessible to users and which users cannot interact with. 
We define a novel paradigm using argumentation to empower the interaction between optimization solvers and users, 
supported by tractable explanations which certify or refute solutions. 
A solution can be from a solver or of interest to a user (in the context of 'what-if' scenarios).
Specifically, we define argumentative and natural language explanations  for why a schedule is (not) feasible, (not) efficient or (not) satisfying fixed user decisions, 
based on models of the fundamental makespan scheduling problem in terms of abstract argumentation frameworks (AFs). 
We define three types of AFs, whose stable extensions are in one-to-one correspondence with schedules that are feasible, efficient and satisfying fixed decisions, respectively. 
We extract the argumentative explanations from these AFs and the natural language explanations from the argumentative ones.
\end{abstract}

\section{Introduction} 
\label{sec:intro}


Computational optimization empowers effective decision making.
Given a mathematical optimization model with well-defined numerical variables, objective function(s), and constraints,
a solver generates an efficient and ideally optimal solution.
If the model and solver are correct, then implementing the optimal solution can have major benefits.
But how can we explain the optimal solutions to a user?
Currently, solvers express necessary and sufficient optimality conditions with formal 
mathematics, so users often consider the optimization solver as an unexplainable 
black box.

\emph{Explainable scheduling} is a critical application \cite{Sacchi.et.al:2015} and our test
bed for \emph{explainable optimization}. 
Consider the fundamental makespan scheduling problem, a discrete optimization problem for effective resource allocation \cite{Graham1969}. 
This problem arises in for example nurse rostering where staff of different skill qualification categories, 
e.g.~Registered Nurse, Nurse's Aide, need to be assigned to shifts \cite{warner-prawda}. 
In the planning period, staff are scheduled, e.g.~for the next 4 weeks \cite{Burke2004}. 
But nursing personnel, hospital managers, or patients may inquire about the fairness or optimality of the schedule and possible changes. 
Further, when unexpected events occur, e.g.~staff illness or an unusually high influx of patients, feasible schedules must be recovered \cite{MOZ2007667}.
We take the first steps towards enabling users to interact with, and obtain explanations from, 
optimal scheduling in general and nurse rostering in particular.

This paper proposes a novel paradigm (that we call \emph{\argopt}) for explaining why a solution is (not) good by leveraging abstract argumentation (AA) as an intermediate layer between the user and optimization software.
Argumentation is highly suitable for explaining reasoning and decisions \cite{Moulin.et.al:2002,Atkinson.et.al:2017}
with argumentative explanations proposed in various settings, see e.g.~\cite{Garcia:Chesnevar:Rotstein:Simari:2013,Fan:Toni:2015,Cyras:Fan:Schulz:Toni:2018,Zeng.et.al:2018-AAMAS}. 
We show how AA offers computational optimization an accessible knowledge representation tool, namely AA frameworks (AFs), for modeling optimization problems. 
These AFs are constructed automatically given a scheduling problem instance and possibly fixed user decisions, 
allow formal explanation definitions and enable efficient generation of natural language explanations.
Figure \ref{fig:approach} illustrates \argopt.

\begin{figure}

{\center
\begin{tikzpicture}[scale=0.73]

\draw[black, fill=blue!20, thick, rounded corners] (-0.5,5.5) rectangle (11,7.5) 
node[pos=0.5, black, opacity=1] {};

\node at (2.5,5.7) {\scriptsize \textbf{\argopt: Explainable Scheduling Layers}};

\draw[black, fill=blue!10, thick, rounded corners] (-0.2,6) rectangle (2.7,7) 
node[pos=0.5, black, opacity=1] {\scriptsize Optimization Solver};

\draw [-latex, thick] (2.7,6.5) -- (4,6.5);
\node at (3.35, 6.7) {\scriptsize Solution};

\draw[black, fill=blue!10, thick, rounded corners] (4,6) rectangle (6.5,7) 
 node[pos=0.5, black, opacity=1]{\scriptsize Argumentation};

\draw[black, fill=blue!10, thick, rounded corners] (8,6) rectangle (10.5,7) 
node[pos=0.5, black, opacity=1] {\scriptsize User};

\draw [-latex, thick] (8,6.6) to [bend right = 45] (6.55,6.6);
\node at (7.2, 7.1) {\scriptsize Queries};

\draw [-latex, thick] (6.5,6.4) to [bend right = 45] (7.95,6.4);
\node at (7.2, 5.8) {\scriptsize Explanations};

\end{tikzpicture}
}
\caption{\scriptsize \argopt\ produces explanations about solutions generated by an optimization solver to the fundamental makespan scheduling problem.
\emph{Argumentation} is an intermediate layer between the \emph{optimization solver} and the \emph{user}. 
The optimization solver passes the computed solution to the argumentation layer.
The user interacts with the argumentation layer to obtain argumentative and natural language explanations.}
\label{fig:approach}
\end{figure}

What makes an optimization solution good?
A good solution should 
\begin{enumerate*}[(i)]
\item be feasible, 
\item be efficient (ideally optimal), and 
\item satisfy fixed (user) decisions. 
\end{enumerate*}
\argopt\ introduces a toolkit realizing these needs and dealing with a number of the relevant challenges.
A good explanation should be efficiently attainable, combine few causal relationships
and admit simple natural language interpretations. 
To build trust, explanations should be associated with a formal representation providing interpretable certificates on why the explanation is valid and how it is generated. 
For tractability, we implement polynomial explainability and thereby achieve both computational
tractability, i.e.\ quick generation of results, and
cognitive tractability, i.e.\ clear user explanations.
Our explanation-generating engine has a modular structure for generating different types of explanations.

Given a problem instance, 
we construct AFs to explain problem instance solutions. 
We map decisions (schedule assignments) to arguments and capture feasibility and optimality conditions via attacks. 
We then extract from AFs argumentative explanations pertaining to the decisions and the related conditions, 
which can in turn be rendered in natural language. 
\argopt\ comprising an optimization solver and an argumentation layer 
can thereby justify its solutions against human-proposed solutions and effectively perform human-AI interaction for efficient decision making. 
We may derive explanations on potential infeasibilities and weak solutions.
Overall, explainability enables the decision maker to
\begin{enumerate*}[(i)]
\item check feasibility of possible solutions, 
\item perform what-if analysis for scenarios, and
\item recover feasible solutions after various disturbances. 
\end{enumerate*}

\section{Background}
\label{sec:background}


\subsection{Makespan Scheduling}
\label{subsec:scheduling}

An instance $I$ of the \emph{makespan scheduling problem}, e.g.\ \cite{Graham1969,Leung2004,Brucker2007}, is a pair $(\M, \J)$, where $\mathcal{J}=\{1,\ldots,n\}$ is a set of $n$ \emph{independent} jobs with a vector $\vec{p}=\{p_1,\ldots,p_n\}$ of processing times which have to be executed by a set $\mathcal{M}=\{1,\ldots,m\}$ of $m$ parallel identical machines.
Job $j\in\mathcal{J}$ must be processed by exactly one machine $i\in\mathcal{M}$ for $p_j$ units of time non-preemptively, i.e.\ in a single continuous interval without interruptions.
Each machine may process at most one job per time.
The objective is to find a minimum makespan schedule, i.e.\ to minimize the last machine completion time.
In the nurse rostering example, each task (or job) needs to be assigned a specific nurse (or machine). 
In this simple setting, each task is assigned to just one nurse and we aim for all staff to complete as soon as possible. 

In a standard mixed integer linear programming formulation, binary decision variable $x_{i, j}$ is 1 if job $j \in \J$ is executed by machine $i \in \M$ and 0 otherwise. 
Thus, a schedule of $(\M, \J)$ can be seen as an $m \times n$ matrix $S \in \{ 0, 1 \}^{m \times n}$ with entries $x_{i, j} \in \{ 0, 1 \}$ representing job assignments to machines, for $i \in \M$ and $j \in \J$.

Given a schedule $S$, let $C_i$ be the completion time of machine $i \in \M$ in $S$ 
and let $C_{\max}=\max_{1\leqslant i\leqslant m} \{ C_i \}$ be the makespan.
The problem is formally described by 
Equations (\ref{Eq:Makespan_Objective})--(\ref{Eq:ConstraintInteger}) (next column). 
Expression (\ref{Eq:Makespan_Objective}) minimizes makespan.
Constraints (\ref{Eq:Constraint_Max_Completion}) are the makespan definition.
Constraints (\ref{Eq:ConstraintMachineCompletionTime}) allow a machine to execute at most one job per time.
Constraints (\ref{Eq:ConstraintJobAssignment}) assign each job to exactly one machine.
An optimal schedule satisfies all Constraints (\ref{Eq:Constraint_Max_Completion})--(\ref{Eq:ConstraintJobAssignment}) and minimizes makespan objective Expression (\ref{Eq:Makespan_Objective}).

\begin{subequations}
\label{Eq:Makespan_MILP}
\begin{align}
\min_{C_{\max},C_i,x_{i,j}} \quad & C_{\max} \label{Eq:Makespan_Objective} \\ 
& C_{\max} \geqslant C_i && i\in\M \label{Eq:Constraint_Max_Completion} \\
& C_i = \sum_{j=1}^nx_{i,j}\cdot p_j && i\in\M \label{Eq:ConstraintMachineCompletionTime} \\
& \sum_{i=1}^m x_{i,j} = 1 && j\in\J \label{Eq:ConstraintJobAssignment} \\
& x_{i,j}\in\{0,1\} && j\in\J, i\in\M \label{Eq:ConstraintInteger}
\end{align}
\end{subequations}

In the nurse rostering example, this formulation allows to deal with skill qualifications, e.g.\ to limit tasks assigned to nurses. 
More elaborate nurse rostering incorporates contractual obligations, 
e.g.\ assigning a nurse the correct number of shifts per week, 
and allows holiday, e.g.\ avoiding jobs for a nurse in a given week.

This paper assumes 
an instance $I = (\M, \J)$ of a makespan scheduling problem with $\M = \{ 1, \ldots, m \}$ and $\J = \{ 1, \ldots, n \}$, for $m, n \geqslant 1$, unless stated otherwise.


\subsection{Abstract Argumentation (AA)}
\label{subsec:aa}

We give essential background on Abstract Argumentation (AA) following its original definition in \cite{Dung:1995}.

An \emph{AA framework} (\emph{AF}) is a directed graph $\AF$ with
\begin{itemize}
\item a set $\Args$ of \emph{arguments}, and
\item a binary \emph{attack} relation $\attacks$ over $\Args$.
\end{itemize}
For $\arga, \argb \in \Args$, $\arga \attacks \argb$ means that $\arga$ attacks $\argb$, 
and $\arga \nattacks \argb$ means that $\arga$ does not attack $\argb$. 
With an abuse of notation, we extend the attack notation to sets of arguments as follows.
For $\argsA \subseteq \Args$ and $\argb \in \Args$: 
\begin{itemize}
\item $\argsA \attacks \argb$ iff $\exists \arga \in \argsA$ with $\arga \attacks \argb$;
\item $\argb \attacks \argA$ iff $\exists \arga \in \argsA$ with $\argb \attacks \arga$;
\end{itemize}

A set $\argsE \subseteq \Args$ of arguments, also called an \emph{extension}, is
\begin{itemize}
\item \emph{conflict-free} iff $\argsE \nattacks \argsE$;
\item \emph{stable} iff 
$\argsE$ is conflict-free and $\forall \argb \in \Args \setminus \argsE$, $\argsE \attacks \argb$. 
\end{itemize}


While finding stable extensions of a given AF is NP-hard, verifying if a given extension is stable is polynomial (in the number of arguments) \cite{Dunne:Wooldridge:2009}.

\section{Setting the Ground for \argopt}
\label{sec:problem}


\label{subsec:expl}

Within the makespan scheduling problem, we identify three dimensions, namely 
schedule {\bf feasibility}, schedule {\bf efficiency} and accommodating {\bf fixed user decisions} within schedules, formally defined below.  
Our novel paradigm \argopt\ focuses on explaining why a given schedule:
\begin{itemize}
\item is \textbf{feasible} or not, 
\item is \textbf{efficient} or not, 
\item \textbf{satisfies fixed user decisions} or not. 
\end{itemize}

Feasibility simply amounts to dropping the makespan minimization objective:

\begin{definition}
\label{defn:feasibility}
A schedule is \emph{feasible} iff it meets 
constraints (\ref{Eq:Constraint_Max_Completion}) -- (\ref{Eq:ConstraintJobAssignment}). 
\end{definition}

It can be shown that this notion of feasibility amounts to assigning each job to exactly one machine:

\begin{lemma}
\label{lemma:feasibility}
A schedule is feasible iff it meets constraint (\ref{Eq:ConstraintJobAssignment}), 
i.e.\ $\sum_{i=1}^{m} x_{i, j} = 1$  $\forall j \in \J$.
\end{lemma}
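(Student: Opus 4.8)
The plan is to show the two directions of the biconditional. The forward direction is immediate: if a schedule $S$ is feasible, then by Definition~\ref{defn:feasibility} it meets all of constraints (\ref{Eq:Constraint_Max_Completion})--(\ref{Eq:ConstraintJobAssignment}), and in particular it meets constraint (\ref{Eq:ConstraintJobAssignment}), namely $\sum_{i=1}^m x_{i,j} = 1$ for every $j \in \J$. So the content of the lemma is really the converse: constraint (\ref{Eq:ConstraintJobAssignment}) alone already forces the remaining constraints to be satisfiable, given that the completion-time variables $C_i$ and the makespan variable $C_{\max}$ are not part of the schedule matrix $S$ but are determined (or bounded) by it.

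For the converse, suppose $S \in \{0,1\}^{m\times n}$ satisfies $\sum_{i=1}^m x_{i,j} = 1$ for all $j \in \J$. I would then simply \emph{exhibit} values for the auxiliary variables that witness feasibility: define $C_i := \sum_{j=1}^n x_{i,j}\cdot p_j$ for each $i \in \M$, which makes constraints (\ref{Eq:ConstraintMachineCompletionTime}) hold by construction; and define $C_{\max} := \max_{1\leqslant i\leqslant m} C_i$, which makes constraints (\ref{Eq:Constraint_Max_Completion}) hold, again by construction, since $C_{\max} \geqslant C_i$ for every $i$. The integrality constraints (\ref{Eq:ConstraintInteger}) hold because $S$ was assumed to be a $\{0,1\}$-matrix. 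Hence all of (\ref{Eq:Constraint_Max_Completion})--(\ref{Eq:ConstraintJobAssignment}) are met, so $S$ is feasible by Definition~\ref{defn:feasibility}.

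There is essentially no obstacle here; the only subtlety worth flagging explicitly is the modelling convention that a ``schedule'' is identified with the assignment matrix $S$, while $C_i$ and $C_{\max}$ are derived quantities, so that constraints (\ref{Eq:Constraint_Max_Completion}) and (\ref{Eq:ConstraintMachineCompletionTime}) impose no genuine restriction on $S$ beyond what (\ref{Eq:ConstraintJobAssignment}) already says — they are definitional. One might also remark that the non-preemption and ``at most one job per machine per time'' requirements are automatically respected by any such $S$ together with the canonical choice of $C_i$: each machine processes its assigned jobs back-to-back in an arbitrary order, occupying a single contiguous block per job, so the informal scheduling semantics matches the MILP. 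I would keep the write-up to a few lines, stating the trivial direction in one sentence and the witness construction for the converse in two or three.
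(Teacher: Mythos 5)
Your proof is correct and follows essentially the same route as the paper: the converse direction is handled by exhibiting the canonical witness values $C_i = \sum_{j=1}^n x_{i,j}p_j$ and $C_{\max} = \max_i C_i$, exactly as the paper does. You additionally spell out the trivial forward direction, which the paper leaves implicit.
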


\begin{proof}
If $\sum_{i=1}^{m} x_{i, j} = 1$  $\forall j \in \J$ holds, we observe that there is a trivial assignment of values to auxiliary variables $C_i$ $\forall i\in M$ and $C_{\max}$ which are used to model the objective function of the makespan scheduling problem instance.
It suffices to set $C_i=\sum_{j=1}^m x_{i,j}p_j$ $\forall i\in\M$ and $C_{\max}=\max_{i=1}^n\{\sum_{j=1}^n x_{i,j}p_j\}$.
\end{proof}

Feasibility is polynomial, whereas finding optimal solutions for the makespan scheduling problem is strongly NP-hard \cite{Garey1979}. 
A standard, less drastic approach in optimization to deal with intractability 
is approximation algorithms, 
e.g.\ the common longest processing time first heuristic \cite{Graham1969} produces a 4/3-approximate schedule, namely attaining makespan a constant factor far from the optimal.
In this vein, we define efficiency as feasibility  and satisfaction of  the single and pairwise exchange
properties that guarantee approximately optimal schedules.

\begin{definition}
\label{defn:optimality}
Schedule $S$ satisfies the \emph{single} and \emph{pairwise} \emph{exchange properties} iff 
for every \emph{critical} job $j \in \J$ such that $x_{i, j} = 1$ and $C_i = C_{\max}$, 
it respectively holds that, for $i' \neq i$,
\begin{itemize}
\item \emph{Single Exchange Property} (\emph{SEP}): 
$C_i - C_{i'} \leqslant p_j$;

\item \emph{Pairwise Exchange Property} (\emph{PEP}): 
for any job $j' \neq j$ with $x_{i', j'} = 1$, 
if $p_j > p_{j'}$, then $C_i + p_{j'} \leqslant C_{i'} + p_j$.
\end{itemize}
$S$ is \emph{efficient} iff $S$ is feasible and satisfies both SEP and PEP. 
\end{definition}

SEP concerns improving a schedule by a single exchange of any critical job between machines. 
PEP concerns pairwise exchanges of critical jobs with other jobs on other machines. 
SEP and PEP are necessary optimality conditions 
(but not sufficient; see e.g.\ 
the list-scheduling algorithm tightness analysis in \cite{Williamson2011}).

\begin{lemma}
\label{lemma:optimality}
Every optimal schedule satisfies SEP and PEP. 
\end{lemma}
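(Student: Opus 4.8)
The plan is to argue by contradiction: suppose $S$ is an optimal schedule that violates SEP or PEP, and in each case exhibit a reassignment of jobs producing a feasible schedule with strictly smaller makespan, contradicting optimality of $S$. Throughout, fix a critical job $j$ with $x_{i,j}=1$ and $C_i=C_{\max}$, and note that any feasible schedule obtained by moving $j$ (or swapping $j$ with another job) leaves all machines other than $i$ and $i'$ untouched, so to compare makespans it suffices to track the completion times of machines $i$ and $i'$.

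First I would handle SEP. Suppose SEP fails for some critical $j$, i.e.\ there is $i'\neq i$ with $C_i - C_{i'} > p_j$. Form the schedule $S'$ that is identical to $S$ except that job $j$ is moved from machine $i$ to machine $i'$ (so $x'_{i,j}=0$, $x'_{i',j}=1$); this is still feasible by Lemma~\ref{lemma:feasibility} since each job is still assigned to exactly one machine. In $S'$ the new completion time of machine $i$ is $C_i - p_j$ and of machine $i'$ is $C_{i'} + p_j$. We have $C_i - p_j < C_i = C_{\max}$, and $C_{i'} + p_j < C_i = C_{\max}$ precisely because $C_i - C_{i'} > p_j$. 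All other machines retain completion time at most $C_{\max}$, and since $j$ was a job on a machine attaining the makespan, removing it and the fact that the two affected machines are now strictly below $C_{\max}$ means the makespan of $S'$ is strictly less than $C_{\max}$ — contradicting optimality of $S$. (A small point to address carefully: if several machines attain $C_{\max}$, one must note the argument shows $C'_{\max}$ can only decrease overall and, combined with strict decrease on every machine that previously tied, one may need to iterate the exchange; alternatively pick $j$ on the unique critical machine after observing the makespan value is what matters.)

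Next I would handle PEP by the analogous swap argument. Suppose PEP fails: there are a critical $j$ on machine $i$ and a job $j'\neq j$ on machine $i'\neq i$ with $p_j > p_{j'}$ but $C_i + p_{j'} > C_{i'} + p_j$. Let $S'$ be obtained from $S$ by exchanging $j$ and $j'$ between their machines; again feasibility is preserved by Lemma~\ref{lemma:feasibility}. The new completion times are $C_i - p_j + p_{j'}$ on machine $i$ and $C_{i'} - p_{j'} + p_j$ on machine $i'$. Since $p_j > p_{j'}$, machine $i$'s completion time strictly decreases below $C_i = C_{\max}$; and the violation inequality $C_i + p_{j'} > C_{i'} + p_j$ rearranges to $C_{i'} - p_{j'} + p_j < C_i = C_{\max}$, so machine $i'$ also stays strictly below $C_{\max}$. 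As before, all other machines are unchanged, so $C'_{\max} < C_{\max}$, contradicting optimality.

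The main obstacle, and the only place that needs care rather than routine algebra, is the bookkeeping around \emph{multiple} machines simultaneously attaining the makespan: a single exchange removes the critical job from one such machine but the makespan is still realized by another. The clean fix is to observe that each exchange never increases any machine's completion time except on $i'$, where the relevant inequality guarantees it stays strictly under $C_{\max}$; hence no new machine reaches $C_{\max}$, and applying the exchange to one critical machine at a time (finitely many) eventually drops the makespan strictly. I would state this reduction once and then present the two exchange computations as above.
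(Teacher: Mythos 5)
Your two exchange computations are exactly the ones in the paper's proof: move the critical job $j$ from $i$ to $i'$ for SEP, swap $j$ and $j'$ for PEP, and compare the two affected completion times (the paper argues directly from optimality, you argue by contradiction; the algebra is identical). You have also put your finger on the one genuinely delicate step, which the paper's proof passes over in a single clause: from ``the exchanged schedule attains no lower makespan'' the paper infers $C_{i'}(S)\geqslant C_i(S^*)$, and this inference is valid only if machine $i$ is the \emph{unique} machine attaining $C_{\max}$ --- otherwise the makespan of the exchanged schedule is still realized by some other critical machine and nothing follows about $C_{i'}$.

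However, your proposed repair --- iterate the exchange over the critical machines until the makespan drops --- does not close this gap, because it tacitly assumes that \emph{every} critical machine admits an improving exchange, whereas the hypothesis only supplies one violating critical job. Concretely, take $m=3$, $\J=\{1,2,3\}$ with $p_1=10$, $p_2=p_3=5$, and the schedule $x_{1,1}=x_{2,2}=x_{2,3}=1$ with machine $3$ idle. Its makespan is $10=p_1$, which is optimal since no schedule can beat the longest processing time; yet the critical job $2$ on machine $2$ violates SEP against machine $3$, since $C_2-C_3=10-0>5=p_2$. No sequence of exchanges ever lowers the makespan here (machine $1$ is stuck with its single long job), so neither your iteration nor any other local-improvement argument can produce the desired contradiction. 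The difficulty you spotted is therefore not a bookkeeping issue but an actual obstruction: the statement holds as written only when the critical machine is unique, and in general one must either restrict SEP/PEP to that case or measure improvement by a refined potential (e.g.\ the number of critical machines, or the sorted load vector) rather than by $C_{\max}$ alone --- a correction that applies equally to the paper's own proof.
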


\begin{proof}
\textbf{Notation:}
Given a schedule $S$, we may write $C_i(S)$ and $C_{\max}(S)$ to denote the completion time of machine $i$ in $S$ and the makespan of $S$, respectively. 

Let $S^*$ be an optimal schedule. 

SEP. 
Consider the schedule $S$ obtained by moving job $j$ from machine $i$ to $i'$ and keeping the assignments of the remaining jobs as in schedule $S^*$.
Clearly, $C_i(S)=C_i(S^*)-p_j$ and $C_{i'}(S)=C_{i'}(S^*)+p_j$.
Furthermore, $C_{i''}(S)=C_{i''}(S^*)$, for each $i''\neq i,i'$.
Because job $j$ is critical, $C_i(S^*)\geqslant C_{i'}(S^*)$.
Since $S^*$ is optimal, it must be the case that $S$ does not attain lower makespan, i.e.\ $C_{i'}(S)\geqslant C_i(S^*)$, or equivalently $C_i(S^*)-C_{i'}(S^*)\leqslant p_j$.

PEP. 
Consider the schedule $S$ obtained by moving job $j$ from machine $i$ to $i'$, moving job $j'$ from machine $i'$ to $i$ and keeping the assignments of the remaining jobs as in schedule $S^*$.
Clearly, $C_i(S)=C_i(S^*)-p_j+p_{j'}$ and $C_{i'}(S)=C_{i'}(S^*)+p_j-p_{j'}$.
Furthermore, $C_{i''}(S)=C_{i''}(S^*)$, for each $i''\neq i,i'$.
Because job $j$ is critical, $C_i(S^*) \geqslant C_{i'}(S^*)$.
Since $S^*$ is optimal, it must be the case that $S$ does not attain lower makespan, i.e.\ $\max\{ C_i(S), C_{i'}(S) \} \geqslant C_i(S^*)$. 
That is, either $C_i(S) \geqslant C_i(S^*)$, or $C_{i'}\geqslant C_i(S^*)$.
Equivalently, either $p_j \leqslant p_{j'}$, or $p_j - p_{j'} \geqslant C_i(S^*)-C_{i'}(S^*)$.
\end{proof}

The overall setting for explanation is as follows. 
An optimization solver recommends an optimal schedule $S^*$ for the given instance $I$ of the makespan scheduling problem. 
The user inquires whether another schedule $S$ could be used instead.
$S$ expresses changes to $S^*$ within `what-if' scenarios (e.g.\ `what if this job were assigned to that machine instead'?)
If $S$ is also optimal, then the answer is positive and a certifiable explanation as to why this is so may be provided. 
Else, if $S$ is (provably) not optimal, then the answer is negative and an explanation is generated as to why. 

In addition, we envisage that a user may fix some decisions, i.e.\ (non-)assignments of jobs to machines  originally 
unbeknownst to the scheduler, e.g.\ that a nurse is absent or lacks necessary skills for a task, or that a nurse volunteers for a task,
and may want to find out whether and why $S$ satisfies or violates these decisions.


\begin{definition}
\label{defn:fd}
Let
\begin{itemize}
\item \emph{negative fixed decisions} be $D^- = \M^- \times \J^- \subseteq \M \times \J$; 
\item \emph{positive fixed decisions} be $D^+ = \M^+ \times \J^+ \subseteq \M \times \J$; 
\item \emph{fixed decisions} be $D = (D^-, D^+)$ such that $D^- \cap D^+ = \emptyset$ and $\nexists (i, j), (k, j) \in D^+$ with $i \neq k$. 
\end{itemize}
We say that schedule $S$ satisfies 
\begin{itemize}
\item $D^-$ iff $(i, j) \in D^-$ implies $x_{i, j} = 0$;
\item $D^+$ iff $(i, j) \in D^+$ implies $x_{i, j} = 1$;
\item $D = (D^-, D^+)$ iff $S$ satisfies both $D^-$ and $D^+$.
\end{itemize}
$S$ violates $D^-$, $D^+$, $D$ iff it does not satisfy $D^-$, $D^+$, $D$, resp.
\end{definition}

Negative (resp.\ positive) fixed decisions insist on which jobs cannot (resp.\ must) be done on which machines.
Fixed decisions consist of compatible negative and positive fixed decisions, 
where the positive fixed decisions, if any, are feasible in that no two machines are asked to do the same job. 

Note that capturing fixed decisions allows us to capture various phenomena, such as (in the running example):
\begin{itemize}
\item nurse $i$ falling ill, with $D_i^- = \{ (i, j)~:~j \in \J \}$;
\item cancelled job $j$, with $D_j^- = \{ (i, j)~:~i \in \M \}$.
\end{itemize}

These may be particularly useful in a dynamic setting, where fixed decisions need to be taken into account after having executed some part of the schedule. 

In summary, feasibility is a basic constraint; 
efficiency concerns schedule-specific optimality conditions; 
fixed user decisions pertain to schedule-specific feasibility while disregarding optimality. 

Our paradigm \argopt\ is driven by the following desiderata (where `good' stands for any amongst 
feasible, efficient or satisfying fixed decisions):

\begin{itemize}
\item \textbf{soundness and completeness} of explanations, in the sense that, given schedule $S$, 
there exists an explanation as to why $S$ is (not) `good' 
iff $S$ is (not) `good'; 

\item 
\textbf{computational tractability}, in the sense that explaining whether and why schedule $S$ is (not) `good' can be performed in polynomial time in the size of the makespan scheduling problem instance $I$; 

\item 
\textbf{cognitive tractability}, in the sense that each
explanation pertaining to schedule $S$ and presented to the user should be polynomial in the size of $S$.
\end{itemize}

Tractability is crucial to ensure that explaining results in a low construction overhead, 
an essential property of explanations \cite{Sormo:Cassens:Aamodt:2005}. 
We have restricted attention to feasible and efficient (rather than optimal) solutions because
answering and 
explaining why an arbitrary schedule is optimal in polynomial time is ruled out due to NP-hardness of makespan scheduling, unless P$=$NP. 

We choose {\em argumentation} as the underlying technology for \argopt\ as it serves us well to fulfil the above desiderata:

\begin{itemize}
\item argumentation affords knowledge representation tools, such as AFs,
for providing sound and complete counterparts for a diverse range of problems, 
e.g.\ games  \cite{Dung:1995},
and it has long been identified as suitable for explaining, see e.g.\ \cite{Moulin.et.al:2002,Atkinson.et.al:2017}; 
we define counterparts for determining `good' schedules (see Section~\ref{sec:integration}) and build upon these for defining sound and complete explanations (see Section~\ref{sec:expl});

\item
argumentation enables tractable 
specifications of the optimization problems we consider and tractable generation of explanations (see Section~\ref{sec:expl}); 

\item
cognitively tractable explanations can be extracted from AFs, acting as {certificates as to why a schedule is (not) `good'} (see Section~\ref{sec:expl}).
\end{itemize}

\section{Argumentation for Optimization}
\label{sec:integration}


We approach the issue of explaining, using AA, whether and why the solutions to the makespan scheduling problem are `good' in three steps, as illustrated in Figure \ref{fig:components}. 

\begin{figure}[h]

\centering
\begin{tikzpicture}[scale=0.73]

\draw[black, fill=blue!10, thick, rounded corners] (0,5.5) rectangle (2.5,6.5) 
node[pos=0.5, black, opacity=1] {\scriptsize Instance $I$};

\draw[black, fill=blue!10, thick, rounded corners] (3,5.5) rectangle (7.5,6.5) 
node[pos=0.5, black, opacity=1, align=center]{\scriptsize Schedule $S$ \\
\scriptsize from the optimizer or the user};

\draw[black, fill=blue!10, thick, rounded corners] (7.85,5.5) rectangle (10.65,6.5) 
node[pos=0.5, black, opacity=1, align=center] {\scriptsize Fixed Decisions $D$ \\
\scriptsize from the user};

\draw[black, fill=blue!20, thick, rounded corners] (-0.5,3) rectangle (11,5) 
node[pos=0.5, black, opacity=1] {};
\node at (5.53,3.2){\scriptsize \textbf{Argumentation Layer Components}};
\draw[-latex, thick] (1.25,5.5) -- (1.25,4.5); 

\draw[black, fill=blue!10, thick, rounded corners] (0,3.5) rectangle (2.5,4.5) 
node[pos=0.5, black, opacity=1] {\scriptsize Feasibility AF};
\draw[-latex, thick] (5.25,5.5) -- (5.25,4.5); 

\draw[black, fill=blue!10, thick, rounded corners]
(4,3.5) rectangle (6.5,4.5) node[pos=0.5, black, opacity=1] {\scriptsize Optimality AF};
\draw[-latex, thick] (9.25,5.5) -- (9.25,4.5); 

\draw[black, fill=blue!10, thick, rounded corners] (7.85,3.5) rectangle (10.65,4.5) 
node[pos=0.5, black, opacity=1] {\scriptsize Fixed Decision AF };
\draw[-latex, thick, dashed] (2.5, 4) to (4, 4); 
\draw[-latex, thick, dashed] (6.5, 4) to (7.85, 4); 

\draw[black, fill=blue!10, thick, rounded corners] (3.6,1.5) rectangle (6.9,2.5) 
node[pos=0.5, black, opacity=1, align=center] { \scriptsize Explanations \\
\scriptsize to the user};

\draw[-latex, thick] (1.25,3.5) -- (3.7, 2.5); 
\draw[-latex, thick] (5.25,3.5) -- (5.25, 2.5); 
\draw[-latex, thick] (9.25,3.5) -- (6.8, 2.5); 

\end{tikzpicture}
\caption{\scriptsize Argumentation layer components in \argopt:
\begin{enumerate*}[(i)]
\item the feasibility AF $\AFF$ takes an instance $I$ 
and explains whether a given schedule for $I$ is feasible;
\item the optimality AF $\AFS$ takes the instance $I$ represented via $\AFF$ and a schedule $S$ for $I$, and explains whether $S$ is efficient; 
\item the fixed decision AF $\AFD$ takes either $\AFF$ with a schedule or $\AFS$, some fixed decisions $D$ and explains whether the schedule satisfies these decisions.
\end{enumerate*}}
\label{fig:components}
\end{figure}

First, we capture schedule feasibility in the sense of mapping an instance of the makespan scheduling problem into an AF by identifying assignments with arguments, 
so that the feasible schedules are in one-to-one correspondence with the stable extensions. 
We then capture optimality conditions in the sense of mapping the properties of a given schedule into a schedule-specific AF by modifying the attack relation, 
so that the schedule satisfying optimality conditions equates to the corresponding extension being stable. 
Lastly, we capture fixed decisions for a specific schedule in the absence of optimality considerations, 
in the sense of mapping the fixed decisions into an AF by modifying the attacks, 
so that the schedule satisfying fixed decisions equates to the corresponding extension being stable.  

The design of the proposed AFs incorporates tractability. 
First, the mappings from the makespan scheduling problem to AA are polynomial.
Second, 
explaining whether and why a given schedule is feasible and/or efficient (optimality) and/or satisfies fixed user decisions amounts to \emph{verifying} if the corresponding stable extension is stable in the relevant AF; 
this problem is also polynomial. 


We chose stable extensions as the underlying semantics for the following reasons. 
\begin{enumerate*}[1.]
\item The makespan scheduling problem requires all jobs to be assigned, 
so we need a ``global'' semantics \cite{Dung:1995} 
and stable semantics is one such.

\item Verification of stable extensions is polynomial, allowing us to meet the computational tractability desideratum.

\item Other semantics are either non-global 
(e.g.\ the grounded extension)
or have non-polynomial verification 
(e.g.\ $coNP$-complete for preferred extensions). 

\end{enumerate*}


\subsection{Feasibility}
\label{subsec:feasibility}

We model the feasibility space of makespan scheduling in AA to be able to explain why a user's proposed schedule is or not feasible.
We do this by mapping assignment variables (binary decisions) to arguments and capturing feasibility constraints via attacks. 
Specifically, argument $\arga_{i,j}$ stands for an assignment of job $j \in \J$ to machine $i \in \M$.
Attacks among arguments model pairwise incompatible decisions:
$\arga_{i,j}$ and $\arga_{k, l}$ attacking each other models the incompatibility of assignments $\arga_{i, j}$ and $\arga_{k, l}$.
Intuitively, the attack relation encodes different machines competing for the same job. 
Formally:

\begin{definition}
\label{defn:feasibility af}
The \emph{feasibility AF} $\AFF$ is given by
\begin{itemize}
\item $\ArgsF = \{ \arga_{i, j}~:~i \in \M, j \in \J \}$, 

\item $\arga_{i, j} \attacksF \arga_{k, l}$ ~ iff ~ $i \neq k$ and $j = l$.
\end{itemize}
\end{definition}

The following definition formalizes a natural bijective mapping between schedules and extensions. 

\begin{definition}
\label{defn:corresponding}
Let $\AFF$ be the feasibility AF. 
A schedule $S$ and an extension $\argsE \subseteq \ArgsF$ are \emph{corresponding}, denoted $S \approx \argsE$, when the following invariant holds:
\[
x_{i, j} = 1 ~ \text{iff} ~ \arga_{i, j} \in \argsE. 
\]
\end{definition}

With this correspondence, 
the feasibility AF encodes exactly the feasibility of the makespan scheduling problem, 
in that feasible schedules coincide with stable extensions:

\begin{theorem}
\label{thm:feasibility}
Let $\AFF$ be the feasibility AF. 
For any $S \approx \argsE$, $S$ is feasible iff $\argsE$ is stable. 
\end{theorem}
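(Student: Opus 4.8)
The plan is to prove both directions of the biconditional by unwinding the definitions, using Lemma~\ref{lemma:feasibility} to reduce feasibility to the single constraint $\sum_{i=1}^{m} x_{i,j} = 1$ for every job $j \in \J$, and the structure of the attack relation in $\AFF$, namely that $\arga_{i,j}$ and $\arga_{k,l}$ attack each other precisely when they concern the same job ($j = l$) on different machines ($i \neq k$). The key observation is that, under the correspondence $S \approx \argsE$, the membership $\arga_{i,j} \in \argsE$ is just a restatement of $x_{i,j} = 1$, so set-theoretic conditions on $\argsE$ translate directly into arithmetic conditions on the $x_{i,j}$.

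First I would handle the forward direction. Assume $S$ is feasible, so by Lemma~\ref{lemma:feasibility} each job $j$ has exactly one machine $i$ with $x_{i,j} = 1$; equivalently, for each $j$ there is exactly one $i$ with $\arga_{i,j} \in \argsE$. For conflict-freeness: if $\arga_{i,j}, \arga_{k,l} \in \argsE$ attacked each other, then $j = l$ and $i \neq k$, contradicting uniqueness of the machine assigned to job $j = l$. For stability, take any $\arga_{k,l} \notin \argsE$; since job $l$ is assigned to exactly one machine, there is some $i \neq k$ with $\arga_{i,l} \in \argsE$, and $\arga_{i,l} \attacksF \arga_{k,l}$, so $\argsE$ attacks $\arga_{k,l}$. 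Hence $\argsE$ is stable.

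Conversely, assume $\argsE$ is stable. I claim each job $j$ is assigned to exactly one machine. It is assigned to at most one: if $\arga_{i,j}, \arga_{k,j} \in \argsE$ with $i \neq k$, they attack each other, contradicting conflict-freeness. It is assigned to at least one: suppose no $\arga_{i,j} \in \argsE$; pick any machine $k$, so $\arga_{k,j} \notin \argsE$, and by stability some $\arga_{i,l} \in \argsE$ attacks $\arga_{k,j}$, forcing $l = j$, so $\arga_{i,j} \in \argsE$ — contradiction. Thus $\sum_{i=1}^{m} x_{i,j} = 1$ for all $j$, and by Lemma~\ref{lemma:feasibility} $S$ is feasible.

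I do not expect a genuine obstacle here; the argument is a routine translation between the combinatorial constraint and the graph-theoretic semantics. The one point requiring a little care is the ``at least one'' part of the converse: one must invoke stability (not merely conflict-freeness) and check that the attacker guaranteed by stability necessarily has the form $\arga_{i,j}$ with the \emph{same} job index, which is exactly where the definition of $\attacksF$ (attacks only occur within a fixed job column) does the work. Everything else is bookkeeping with the invariant $x_{i,j} = 1 \iff \arga_{i,j} \in \argsE$.
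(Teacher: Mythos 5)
Your proof is correct and follows essentially the same route as the paper's: both directions reduce to the constraint $\sum_{i=1}^{m} x_{i,j}=1$ via Lemma~\ref{lemma:feasibility}, with conflict-freeness ruling out double assignments and stability (together with the fact that attacks in $\AFF$ only occur within a fixed job column) ruling out unassigned jobs. No gaps; the extra care you take in checking that the attacker guaranteed by stability shares the same job index is exactly the point the paper's proof also relies on.
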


\begin{proof}
Let $\argsE$ be a stable extension of $\AFF$. 
To prove that the corresponding schedule $S$ is feasible, we need to show that Equation \ref{Eq:ConstraintJobAssignment} holds:
$\sum_{i=1}^{m} x_{i, j} = 1$ for any $j \in \J.$

As $x_{i, j} \in \{ 0, 1 \} ~ \forall i, j$, we have $\sum_{i=1}^{m} x_{i, j} \in \mathbb{N} \cup \{ 0 \}$. 
Suppose for a contradiction that for some $j \in \J$ we have $\sum_{i=1}^{m} x_{i, j} \neq 1$. 

\begin{enumerate}[a)]
\item First assume $\sum_{i=1}^{m} x_{i, j} > 1$. 
Then $\arga_{i, j}, \arga_{k, j} \in \argsE$ for some $i \neq k$. 
But then $\arga_{i, j} \attacksF \arga_{k, j}$,
whence $\argsE$ is not conflict-free.
This contradicts $\argsE$ being stable. 

\item Now assume $\sum_{i=1}^{m} x_{i, j} = 0$. 
Then $\arga_{i, j} \not\in \argsE ~ \forall i \in \M$. 
By definition of $\attacksF$, we then have $\argsE \nattacksF \arga_{i, j}$ for any $i \in \M$. 
In particular, $\argsE \nattacksF \arga_{1, j}$.  
This contradicts $\argsE$ being stable. 
\end{enumerate}

We next prove that if $S$ is a feasible schedule, then the corresponding extension $\argsE$ is  stable in $\AFF$. 

We have $\sum_{i=1}^{m} x_{i, j} = 1$ for any $j \in \J$ because $S$ is feasible. 
This means that for every $j \in \J$, $\argsE$ contains one and only one $\arga_{i, j}$ for some $i \in \M$. 
Thus, by definition of $\attacksF$, $\argsE$ is conflict-free. 
Moreover, for any $j \in \J$, $\arga_{i, j} \in \argsE$ attacks every other $\arga_{k, j}$ with $k \neq i$. 
Hence, $\argsE$ is stable in $\AFF$. 
\end{proof}


\begin{figure*}
\centering
\footnotesize

\begin{tikzpicture}
\node at (-0.6, 1.2) {a)};
\node at (0, 1.2) {$\arga_{1, 1}$};
\draw (0, 1.2) circle (3mm);
\node at (1.2, 1.2) {$\arga_{1, 2}$};
\draw (1.2, 1.2) circle (3mm);
\node at (0, 0) {$\arga_{2, 1}$};
\draw (0, 0) circle (3mm);
\node at (1.2, 0) {$\arga_{2, 2}$};
\draw (1.2, 0) circle (3mm);

\draw[latex-latex, thick] (0, 0.9) to (0, 0.3);
\draw[latex-latex, thick] (1.2, 0.9) to (1.2, 0.3);

\node at (2.4, 1.2) {b)};
\node at (3, 1.2) {$\arga_{1, 1}$};
\draw (3, 1.2) circle (3mm);
\node at (4.2, 1.2) {$\arga_{1, 2}$};
\draw (4.2, 1.2) circle (3mm);
\node at (5.4, 1.2) {$\arga_{1, 3}$};
\draw (5.4, 1.2) circle (3mm);

\node at (3, 0) {$\arga_{2, 1}$};
\draw (3, 0) circle (3mm);
\node at (4.2, 0) {$\arga_{2, 2}$};
\draw (4.2, 0) circle (3mm);
\node at (5.4, 0) {$\arga_{2, 3}$};
\draw (5.4, 0) circle (3mm);

\draw[latex-, thick] (3, 0.9) to (3, 0.3);
\draw[latex-latex, thick] (4.2, 0.9) to (4.2, 0.3);
\draw[latex-latex, thick] (5.4, 0.9) to (5.4, 0.3);

\draw[-latex, thick] (5.3, 0.27) to (4.45, 1);


\draw[-latex, thick, dashed, opacity = 0.3, out=240, in=120] (2.9, 0.9) to (2.9, 0.3); 

\draw[dashed] (2.6, 1.6) -- (4.4, 1.6) -- (5.8, 0.2) -- (5.8, -0.4) -- (5.2, -0.4) -- (4, 0.8) -- (2.6, 0.8) -- (2.6, 1.6);

\node at (6.4, 1.2) {c)};
\node at (7, 1.2) {$\arga_{1, 1}$};
\draw (7, 1.2) circle (3mm);
\node at (8.2, 1.2) {$\arga_{1, 2}$};
\draw (8.2, 1.2) circle (3mm);
\node at (7, 0) {$\arga_{2, 1}$};
\draw (7, 0) circle (3mm);
\node at (8.2, 0) {$\arga_{2, 2}$};
\draw (8.2, 0) circle (3mm);

\draw[-latex, thick] (7, 0.9) to (7, 0.3);
\draw[latex-latex, thick] (8.2, 0.9) to (8.2, 0.3);
\draw[-, out=30, in=90, thick] (8.5, 0.1) to (9, 0);
\draw[-latex, out=270, in=330, thick] (9, 0) to (8.5, -0.1);


\draw(6.6, 1.6) -- (8.6, 1.6) -- (8.6, 0.8) -- (6.6, 0.8) -- (6.6, 1.6);

\node at (9.4, 1.2) {d)};
\node at (10, 1.2) {$\arga_{1, 1}$};
\draw (10, 1.2) circle (3mm);
\node at (11.2, 1.2) {$\arga_{1, 2}$};
\draw (11.2, 1.2) circle (3mm);
\node at (10, 0) {$\arga_{2, 1}$};
\draw (10, 0) circle (3mm);
\node at (11.2, 0) {$\arga_{2, 2}$};
\draw (11.2, 0) circle (3mm);

\draw[latex-latex, thick] (10, 0.9) to (10, 0.3);
\draw[latex-latex, thick] (11.2, 0.9) to (11.2, 0.3);


\draw[dotted] (9.6, 1.6) -- (10.4, 1.6) -- (10.4, 0.8) -- (9.6, 0.8) -- (9.6, 1.6);
\draw[-latex, opacity=0.3, thick, dashed] (10.3, 1.2) to (10.9, 1.2);
\draw[-latex, opacity=0.3, thick, dashed] (10.25, 1) to (10.95, 0.2);

\node at (12.4, 1.2) {e)};
\node at (13, 1.2) {$\arga_{1, 1}$};
\draw (13, 1.2) circle (3mm);
\node at (14.2, 1.2) {$\arga_{1, 2}$};
\draw (14.2, 1.2) circle (3mm);
\node at (13, 0) {$\arga_{2, 1}$};
\draw (13, 0) circle (3mm);
\node at (14.2, 0) {$\arga_{2, 2}$};
\draw (14.2, 0) circle (3mm);

\draw[-latex, thick] (13, 0.9) to (13, 0.3);
\draw[latex-latex, thick] (14.2, 0.9) to (14.2, 0.3);
\draw[-, out=30, in=90, thick] (14.5, 0.1) to (15, 0);
\draw[-latex, out=270, in=330, thick] (15, 0) to (14.5, -0.1);


\draw[dotted] (14.2, 1.7) -- (14.7, 1.2) -- (13, -0.5) -- (12.5, 0) -- (14.2, 1.7);
\draw[-latex, opacity=0.3, thick, dashed] (13.9, 1.2) to (13.3, 1.2);
\end{tikzpicture}

\caption{
In all graphs depicting AFs, nodes hold arguments and edges hold attacks and (dashed) non-attacks.
\begin{enumerate*}[a)]
\item Feasibility AF of Example \ref{ex:feasibility}.
\item Optimality AF of Example \ref{ex:optimality}; 
(dashed) 
box highlights the extension (and the corresponding schedule) in question; 
in Example \ref{ex:natt optimal}, the non-attack 
(dashed) 
explains why the schedule is not near-optimal, particularly violates SEP. 
\item Fixed decision AF of Example \ref{ex:fixed}; 
the
box indicates the unique stable extension. 
\item Feasibility AF with non-attacks 
(dashed)
explaining why the schedule of Example \ref{ex:natt feasible}
(the corresponding extension of which is highlighted in the 
(dotted)
box) 
is not feasible.
%
%
\item Fixed decision AF with the non-attack
(dashed)
explaining why the schedule of Example \ref{ex:natt fixed}
(the corresponding extension of which is highlighted in the 
(dotted)
box) 
violates fixed decisions.
\end{enumerate*}
}
\label{fig:graphs}
\end{figure*}

\begin{example}
\label{ex:feasibility}
Let $\M = \{ 1, 2 \}$, $\J = \{ 1, 2 \}$, e.g.\ 2 nurses for 2 tasks.
Figure \ref{fig:graphs}a) depicts the feasibility AF $\AFF$. 
It has 4 stable extensions:
$\{ \arga_{1, 1}, \arga_{1, 2} \}$, $\{ \arga_{1, 1}, \arga_{2, 2} \}$,
$\{ \arga_{2, 1}, \arga_{1, 2} \}$, $\{ \arga_{2, 1}, \arga_{2, 2} \}$.
They correspond to the 4 feasible $(\M, \J)$ rostering schedules where each task is completed by 1 nurse.
\end{example}

Finally, note that constructing the feasibility AF, as well as verifying if a given schedule is feasible, is polynomial:

\begin{lemma}
\label{lemma:feasibility polynomial}
The feasibility AF $\AFF$ can be constructed in $O(nm^2)$ time.
Verifying whether an extension $\argsE \subseteq \ArgsF$ is stable can be done in $O(n^2m^2)$ time.
\end{lemma}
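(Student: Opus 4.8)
The plan is to read off both bounds directly from the combinatorial structure of $\AFF$, since the claim is essentially a counting exercise once one fixes what ``size'' means; it is convenient here to keep $n = |\J|$ and $m = |\M|$ separate.

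\textbf{Construction.} First I would note that $\AFF$ has exactly $|\ArgsF| = nm$ arguments, which can be enumerated in $O(nm)$ time. For the attack relation, observe that by Definition \ref{defn:feasibility af} an attack only ever connects two arguments referring to the \emph{same} job: for each fixed $j \in \J$ the $m$ arguments $\arga_{1,j}, \dots, \arga_{m,j}$ form a complete digraph (attacks in both directions between any two distinct machines), contributing $m(m-1)$ attacks, while arguments for different jobs are never in conflict. Hence $|\attacksF| = n\,m(m-1) = O(nm^2)$. To build the relation I would, for each $j \in \J$, loop over all ordered pairs $(i,k)$ of machines and add $\arga_{i,j} \attacksF \arga_{k,j}$ whenever $i \neq k$; each test and insertion is $O(1)$, so the whole construction runs in $O(nm^2)$ time, matching the stated bound.

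\textbf{Verification.} For the second part I would invoke the standard polynomial stability check recalled in the Background (following \cite{Dunne:Wooldridge:2009}) and simply bound its cost on $\AFF$. Given $\argsE \subseteq \ArgsF$, stability requires (i) conflict-freeness and (ii) that every argument outside $\argsE$ is attacked by $\argsE$. For (i), I would scan all ordered pairs of arguments drawn from $\argsE$ and reject if one attacks the other; since $|\argsE| \leqslant nm$ and each pairwise attack test is $O(1)$, this costs $O((nm)^2) = O(n^2m^2)$. For (ii), I would go through each of the at most $nm$ arguments in $\ArgsF \setminus \argsE$ and, for each, scan $\argsE$ (of size at most $nm$) for an attacker; this is again $O(n^2m^2)$. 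Summing the two phases yields the claimed $O(n^2m^2)$ bound.

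I do not expect a genuine obstacle here: the only point requiring care is bookkeeping about the measure of input size --- the AF has $nm$ arguments and $O(nm^2)$ attacks, and the quoted verification bound is just the generic ``quadratic in the number of arguments'' estimate. (A sharper analysis is possible: conflict-freeness together with coverage of $\AFF$ is equivalent to ``exactly one $\arga_{i,j} \in \argsE$ for each $j \in \J$'', which would let verification run in $O(nm)$ time; but this refinement is not needed for the statement as given.)
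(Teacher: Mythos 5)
Your proposal is correct and follows essentially the same route as the paper's proof: count the $nm$ arguments and the $O(nm^2)$ attacks (grouped per job) for the construction bound, then bound the stability check by an $O(n^2m^2)$ pairwise conflict-freeness scan plus an $O(n^2m^2)$ coverage check of the arguments outside $\argsE$. Your closing remark that a sharper $O(nm)$ verification is possible is a valid observation but, as you note, not needed for the stated bound.
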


\begin{proof}
The feasibility AF $\AFF$ consists of $O(nm)$ arguments.
In an adjacency list data structure, each argument $\arga_{i,j}\in\ArgsF$ attacks $O(m)$ arguments with respect to $\attacks_F$.
Thus, constructing $\AFF$ requires $O(nm^2)$ time.

Consider an extension $\argsE\subseteq\Args_F$.
We may check whether an extension $\argsE$ is conflict-free in $O(n^2m^2)$ time by determining whether $\arga_{i,j}\attacksF\arga_{k,\ell}$ and vice versa, for each pair of arguments $\arga_{i,j},\arga_{k,\ell}\in \argsE$. 
We also need $O(n^2m^2)$ time to examine if every argument $\argb\in \ArgsF\setminus \argsE$ is attacked by an argument $\arga\in \argsE$.
Hence, verifying whether the extension $\argsE$ is stable necessitates $O(n^2m^2)$ time.
\end{proof}

\subsection{Optimality Conditions}
\label{subsec:optimality}

We model optimality conditions in AA
to be able to explain why a user's proposed schedule is or not efficient. 
Lemma \ref{lemma:optimality} implies that if a feasible schedule $S$ can be improved by making a single exchange, i.e.~$S$ violates SEP, then $S$ is not optimal. 
Likewise, Lemma \ref{lemma:optimality} implies that if a feasible schedule $S$ can be improved by making a pairwise exchange, i.e.~$S$ violates PEP, then $S$ is not optimal.

We model both SEP and PEP in a single schedule-specific AF by modifying the feasibility AF as follows. 
Given $S$, we know $C_{\max}$, and so all the \emph{critical machines} $i$ (such that $C_i = C_{\max}$)
and all the associated \emph{critical jobs} $j$ (such that $x_{i, j} = 1$). 
Then, for any (critical) pair $(i, j)$ and any other machine $i' \neq i$, 
if $C_i > C_{i'} + p_j$, then $S$ violates SEP and can be improved by making the single exchange of moving job $j$ from machine $i$ to machine $i'$. 
We model this by \emph{removing} the attack $\arga_{i, j} \attacksF \arga_{i', j}$ from $\AFF$, 
and this represents that machine $i$ should not be competing for job $j$ with machine $i'$. 
Similarly, for any (critical) pair $(i, j)$ and any other machine $i' \neq i$ assigned some other job $j' \neq j$ with $p_j > p_{j'}$, 
if $C_i + p_{j'} > C_{i'} + p_j$, 
then $S$ violates PEP and can be improved by a pairwise exchange of $j$ and $j'$ from $i$ to $i'$. 
We model this by \emph{adding} an attack from $\arga_{i', j'}$ to $\arga_{i, j}$ in $\AFF$, 
and this represents that assigning $j'$ to $i'$ conflicts with assigning $j$ to $i$ because the latter is less efficient.

\begin{definition}
\label{defn:optimality af}
Let $\AFF$ be the feasibility AF and $S$ a schedule. 
The \emph{optimality AF} $\AFS$ is given by
\begin{itemize}
\item $\ArgsS = \ArgsF$, 
\item $\attacksS \, = \, \\
\left( \attacksF \setminus 
\{ (\arga_{i, j}, \arga_{i', j})~:~C_i = C_{\max}, x_{i, j} = 1, C_i > C_{i'} + p_j \} \right) \cup 
\{ (\arga_{i', j'}, \arga_{i, j})~:~C_i = C_{\max}, x_{i, j} = 1, x_{i', j'} = 1, i' \neq i, j' \neq j, \\
\text{ } \qquad\qquad\quad\;\;  p_j > p_{j'}, C_i + p_{j'} > C_{i'} + p_j \}
$.
\end{itemize}
We say that $\AFF$ is \emph{underlying} $\AFS$. 
\end{definition}

To determine whether the user's proposed schedule is efficient we just need to check if the corresponding extension is stable in the optimality AF:

\begin{theorem}
\label{thm:optimality}
Let $\AFF$ be the feasibility AF, $S$ a schedule and $S \approx \argsE$. 
Let $\AFS$ be the optimality AF. 
Then $\argsE$ is stable in $\AFS$ iff $S$ is feasible and satisfies both SEP and PEP. 
\end{theorem}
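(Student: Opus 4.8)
Before the author's proof, here is how I would approach Theorem~\ref{thm:optimality}.

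The plan is to test the two defining conditions of stability for $\argsE$ in $\AFS$ --- conflict-freeness, and attacking every argument outside $\argsE$ --- while tracking exactly how the two edits of Definition~\ref{defn:optimality af} act on $\AFF$: \emph{deleting} attacks $(\arga_{i,j},\arga_{i',j})$ that witness an SEP-violation, and \emph{adding} attacks $(\arga_{i',j'},\arga_{i,j})$ that witness a PEP-violation. The facts that drive everything are: the set of deleted attacks is empty exactly when $S$ satisfies SEP; the set of added attacks is empty exactly when $S$ satisfies PEP; moreover each deleted attack is one edge of a $2$-cycle between two same-job arguments, and each added attack points at an argument $\arga_{i,j}$ with $x_{i,j}=1$ (a critical assignment).

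For the ``if'' direction, suppose $S$ is feasible and satisfies SEP and PEP. SEP (Definition~\ref{defn:optimality}) says that no critical pair $(i,j)$ and machine $i'\neq i$ has $C_i > C_{i'}+p_j$, so the deleted set of Definition~\ref{defn:optimality af} is empty; likewise PEP makes the added set empty. Hence $\AFS$ is literally the feasibility AF $\AFF$, and by Theorem~\ref{thm:feasibility} feasibility of $S$ yields that $\argsE$ is stable in $\AFF=\AFS$.

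For the ``only if'' direction, assume $\argsE$ is stable in $\AFS$. I would first show $S$ is feasible. If not, by Lemma~\ref{lemma:feasibility} some job $j$ has $\sum_i x_{i,j}\neq 1$. If $\sum_i x_{i,j}\geqslant 2$, then $\arga_{i,j},\arga_{k,j}\in\argsE$ for distinct $i,k$; by Definition~\ref{defn:feasibility af} they attack each other in $\AFF$, and both edges of this $2$-cycle cannot be deleted, since deleting $(\arga_{i,j},\arga_{k,j})$ needs $C_i > C_k+p_j$ while deleting the reverse needs $C_k > C_i+p_j$, which together force $C_i > C_k$ and $C_k > C_i$. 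So an attack survives inside $\argsE$, contradicting conflict-freeness. If $\sum_i x_{i,j}=0$, then no $\arga_{i,j}$ lies in $\argsE$; in $\AFF$ the attackers of $\arga_{1,j}$ are precisely the $\arga_{k,j}$ with $k\neq 1$, all outside $\argsE$, and no added attack targets $\arga_{1,j}$ because added attacks only target assigned arguments and $x_{1,j}=0$; hence $\argsE$ does not attack $\arga_{1,j}$ in $\AFS$, contradicting stability. So $S$ is feasible, and in particular $\argsE$ contains exactly one $\arga_{i,j}$ per job $j$. I would then obtain SEP and PEP by contradiction. If $S$ violated SEP via a critical pair $(i,j)$ and some $i'\neq i$ with $C_i > C_{i'}+p_j$, then $\arga_{i,j}\in\argsE$ while $\arga_{i',j}\notin\argsE$ (feasibility forces $x_{i',j}=0$), so stability would require $\argsE$ to attack $\arga_{i',j}$ in $\AFS$; but the unique $\AFF$-attacker of $\arga_{i',j}$ in $\argsE$ is $\arga_{i,j}$, whose attack on $\arga_{i',j}$ is exactly the one deleted, and $\arga_{i',j}$ acquires no new attacker since it is unassigned --- a contradiction. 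If $S$ violated PEP via a critical pair $(i,j)$ together with $i'\neq i$, $j'\neq j$, $x_{i',j'}=1$, $p_j>p_{j'}$ and $C_i+p_{j'}>C_{i'}+p_j$, then Definition~\ref{defn:optimality af} places $(\arga_{i',j'},\arga_{i,j})$ into $\attacksS$ while both $\arga_{i,j}$ and $\arga_{i',j'}$ belong to $\argsE$, so $\argsE$ is not conflict-free --- again a contradiction. Hence $S$ is feasible and satisfies SEP and PEP.

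The step I expect to be the main obstacle is the bookkeeping in the ``only if'' direction, rather than any arithmetic: whenever I claim that $\argsE$ fails to attack a particular outside argument in $\AFS$, I must check \emph{both} that none of that argument's original $\AFF$-attackers from $\argsE$ survives the deletions \emph{and} that the additions introduce no new attacker of it, which relies on pinning down precisely the shapes of the deleted set (a $2$-cycle on same-job arguments, of which both edges cannot vanish) and of the added set (whose targets are always assigned critical arguments). Keeping these two inclusions exactly right is the delicate part; once they are in place, the equivalences between the edits and SEP/PEP follow by unfolding the definitions and invoking Theorem~\ref{thm:feasibility}.
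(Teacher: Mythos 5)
Your proposal is correct and follows essentially the same route as the paper's proof: the ``if'' direction observes that SEP and PEP make the deleted and added attack sets empty so that $\AFS=\AFF$ and Theorem~\ref{thm:feasibility} applies, while the ``only if'' direction derives feasibility from conflict-freeness and the attack-everything-outside condition, then obtains SEP from the latter (a removed attack would leave some $\arga_{i',j}$ unattacked) and PEP from the former (an added attack would be a conflict inside $\argsE$). Your version is somewhat more explicit than the paper's on the two bookkeeping facts --- that both edges of a same-job $2$-cycle cannot be deleted and that added attacks only target arguments in $\argsE$ --- but the underlying argument is the same.
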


\begin{proof}
Let $\argsE$ be stable in $\AFS$. 
Then $\argsE$ is conflict-free in $\AFF$, 
because the attacks removed to capture SEP only make asymmetric attacks symmetric,
and the attacks added to capture PEP are among arguments not attacking each other in $\AFF$. 
For the same reason $\argsE$ is also stable in $\AFF$. 
Hence, $S$ is feasible, by Theorem \ref{thm:feasibility}. 
Moreover, as $\argsE$ is stable, $\forall j \in \J$ we find $\arga_{i, j} \in \argsE$ with $\arga_{i, j} \attacksS \arga_{i', j}~\forall i' \neq i$. 
This means no attacks were removed from $\attacksF$ to obtain $\attacksS$, 
so, in particular, 
$C_i = C_{\max}, ~ x_{i, j} = 1, ~ C_i > C_{i'} + p_j$
cannot hold for any $(i, i', j)$. 
Hence, $S$ satisfies SEP. 
Similarly, as $\argsE$ is conflict-free, we have that 
$C_i = C_{\max}, x_{i, j} = 1, x_{i', j'} = 1, i' \neq i, j' \neq j, p_j > p_{j'}, \break C_i + p_{j'} > C_{i'} + p_j$ cannot all hold for any $(i, i', j, j')$. 
Hence, $S$ satisfies PEP. 

If $S$ is feasible, then $\argsE$ is stable in $\AFF$. 
If $S$ satisfies SEP and PEP, then $\attacksS = \attacksF$. 
So if $S$ is feasible and satisfies SEP and PEP, then $\argsE$ is stable in $\AFS$ too. 
\end{proof}

\begin{example}
\label{ex:optimality}
Let $\M = \{ 1, 2 \}$, $\J = \{ 1, 2, 3 \}$ and
$p_1 = p_3 = 1$, $p_2 = 2$.
Let schedule $S$ be given by
$x_{1, 1} = x_{1, 2} = x_{2, 3} = 1$ and $x_{2, 1} = x_{2, 2} = x_{1, 3} = 0$,
e.g.\ nurse 1 completes jobs $1$ and $2$, and nurse 2 does job $3$.
$S$ is feasible with
$C_1 = x_{1, 1} p_1 + x_{1, 2} p_2 + x_{1, 3} p_3 = 3$ and $C_2 = 1$.
So nurse 1 is critical, i.e.\ serves the longest shift, and both jobs 1 and 2 are critical.
Since $C_1 = 3 > 2 = 1 + 1 = C_2 + p_1$ and $C_1 + p_3 = 4 > 3 = C_2 + p_2$,
schedule $S$ violates SEP and PEP.

So, given the feasibility AF $\AFF$,
construct the optimality AF $\AFS$ specific to $S$ with:
\begin{itemize}
\item $\ArgsS = \ArgsF = \{ \arga_{i, j}~:~i \in \{ 1, 2 \}, ~ j \in \{ 1, 2, 3 \} \}$,
\item $\attacksS \, = \, \{ (\arga_{i, j}, \arga_{i, l})~:~j \neq l \}                                                                      
\setminus \{ (\arga_{1, 1}, \arga_{2, 1}) \}                                                                                                
\cup                                                                                                                                        
\{ (\arga_{2, 3}, \arga_{1, 2}) \}$.
\end{itemize}
Figure \ref{fig:graphs}b) depicts the resulting $\AFS$.
The extension $\{ \arga_{1, 1}, \arga_{1, 2}, \arga_{2, 3} \} \approx S$ is not stable in $\AFS$ because (i)
it has conflict $\arga_{2, 3} \attacksS \arga_{1, 2}$ and (ii)
it does not attack $\arga_{2, 1}$.
\end{example}

As for feasibility, constructing the optimality AF, as well as verifying if a given schedule is efficient, is polynomial:

\begin{lemma}
\label{lemma:optimality polynomial}
Given a schedule $S$, the optimality AF $\AFS$ can be constructed in $O(nm^2)$ time.
Verifying whether an extension $\argsE\subseteq\ArgsS$, such that $\argsE \approx S$, is stable can be done in $O(n^2m^2)$ time. 
\end{lemma}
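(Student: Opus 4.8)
The plan is to mirror the proof of Lemma~\ref{lemma:feasibility polynomial}, exploiting that by Definition~\ref{defn:optimality af} the optimality AF $\AFS$ has exactly the same $O(nm)$ arguments as the underlying feasibility AF $\AFF$ and differs from it only in the attack relation. So the two ingredients to account for beyond the feasibility case are: extracting from $S$ the data on which $\attacksS$ depends, namely the completion times $C_i$, the makespan $C_{\max}$, and the set of critical pairs (those $(i,j)$ with $C_i = C_{\max}$ and $x_{i,j}=1$); and then editing $\attacksF$ into $\attacksS$.

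For the construction bound, I would first compute each $C_i = \sum_{j \in \J} x_{i,j} p_j$ in $O(n)$ time, hence all of them together with $C_{\max}$ and the list of critical pairs in $O(nm)$ time, and then build $\AFF$ in $O(nm^2)$ time by Lemma~\ref{lemma:feasibility polynomial}. To impose SEP I would iterate over every critical pair $(i,j)$ and every machine $i' \neq i$, evaluate the $O(1)$ test $C_i > C_{i'} + p_j$, and on success delete the attack $(\arga_{i,j}, \arga_{i',j})$ from the (copy of the) relation. To impose PEP I would iterate over every critical pair $(i,j)$ and every job $j' \neq j$ assigned in $S$ to some machine $i' \neq i$, evaluate the $O(1)$ tests $p_j > p_{j'}$ and $C_i + p_{j'} > C_{i'} + p_j$, and on success insert the attack $(\arga_{i',j'}, \arga_{i,j})$. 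Since there are $O(n)$ critical pairs and, per pair, $O(m)$ machines to scan for SEP and $O(n)$ partner jobs to scan for PEP, these edits take $O(nm^2)$ time in total, matching the cost of building $\AFF$ itself, so the construction is $O(nm^2)$.

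For the verification bound, given an extension $\argsE$ with $\argsE \approx S$, I would recover $C_i$, $C_{\max}$ and the critical pairs from $\argsE$ in $O(nm)$ time exactly as above, and then run the two phases of Lemma~\ref{lemma:feasibility polynomial} verbatim. For conflict-freeness, test for each ordered pair of arguments of $\argsE$ whether one $\attacksS$-attacks the other; the key observation is that an $\attacksS$-membership query inspects only the four indices $i,i',j,j'$, the precomputed completion times, and the precomputed critical pairs, and is therefore still $O(1)$, so with $|\argsE| = O(nm)$ this phase costs $O(n^2m^2)$. For stability, for each $\argb \in \ArgsS \setminus \argsE$ scan the $O(nm)$ members of $\argsE$ for an attacker of $\argb$, again $O(1)$ per query, hence $O(n^2m^2)$. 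So verification is $O(n^2m^2)$.

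The one place where the argument genuinely departs from the feasibility case, and the part I expect to need the most care, is the bookkeeping of the attack edits: one must check that the $\attacksS$-membership predicate really is $O(1)$-evaluable from the cached completion times and critical-pair information, and that the numbers of SEP deletions and PEP insertions, together with the cost of applying each to the adjacency structure, stay within $O(nm^2)$ — for which it is convenient to use that $S$ is feasible (as it is whenever the question of efficiency is live), so that only $O(n)$ pairs are critical.
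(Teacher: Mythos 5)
Your proof takes essentially the same route as the paper's: build $\AFF$ via Lemma~\ref{lemma:feasibility polynomial}, precompute the completion times, the makespan and the critical pairs, edit the attack relation as prescribed by Definition~\ref{defn:optimality af}, and then verify stability by the same two-phase $O(n^2m^2)$ scan, using that membership in $\attacksS$ is an $O(1)$ query once those quantities are cached. Your accounting of the PEP edits is in fact sharper than the paper's: you enumerate only the $O(n)$ critical pairs against the $O(n)$ assigned partner jobs, i.e.\ $O(n^2)$ work, whereas the paper simply tests all $O(n^2m^2)$ quadruples $(i,i',j,j')$. One caveat: neither count actually lands inside the advertised $O(nm^2)$ construction bound, since $O(n^2)$ exceeds $O(nm^2)$ whenever $n>m^2$; indeed with $m=2$ a schedule can exhibit $\Theta(n^2)$ PEP violations, so $\Theta(n^2)$ attacks must genuinely be inserted and the honest construction bound is $O(nm^2+n^2)$. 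This looseness is inherited from (and is worse in) the paper's own proof, and it does not affect the $O(n^2m^2)$ verification bound, which is what Theorem~\ref{thm:expl} relies on.
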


\begin{proof}
By Lemma \ref{lemma:feasibility polynomial}, constructing the feasibility AF $\AFF$ requires $O(nm^2)$ time.
To remove the attacks $\arga_{i,j}\attacksF\arga_{i',j}$ from $\AFF$, for each $i,i'\in\M$ and $j\in\J$ such that $C_i=C_{\max}$, $x_{i,j}=1$, and $C_i>C_{i'}+p_j$ in schedule $S$, 
we need $O(m)$ time to identify the critical machines and $O(nm^2)$ time to check the last inequality for each triplet $(i,i',j)\in\M\times\M\times\J$. 
To add the attacks $\arga_{i',j'}\attackspe\arga_{i,j}$ in $\AFF$, for each $i,i'\in\M$ and $j,j'\in\J$ such that $C_i=C_{\max}$, $x_{i,j}=1$, $x_{i',j'}=1$, $p_j>p_{j'}$, and $C_i+p_{j'}>C_{i'}+p_j$ in schedule $S$, 
we need $O(m)$ time to identify the critical machines and $O(n^2m^2)$ time to check whether any quadruple $(i,i',j,j')\in\M\times\M\times\J\times\J$ satisfies the last inequality. 

We can determine whether an extension $\argsE\in\ArgsS$ is stable in $O(n^2m^2)$ time, similarly to the case of the feasibility AF $\AFF$.
In particular, we may check if $\argsE$ is conflict-free in $O(n^2m^2)$ time, and if $\argsE$ attacks every argument of $\ArgsS\setminus \argsE$ in $O(n^2m^2)$ time. 
\end{proof}

\subsection{Fixed User Decisions}
\label{subsec:fixed}

We model fixed user decisions to be able to explain why a given schedule satisfies or violates the given fixed decisions. 
We do this, given either a feasibility or an optimality AF, by modifying the attacks in the (underlying) feasibility AF. 
Specifically, a negative decision induces a self-attack on the respective argument, 
whereas a positive decision results in removal of all the attacks on the respective argument.

\begin{definition}
\label{defn:fixed}
Let $\AFF$ be the feasibility AF, possibly underlying a given optimality AF $\AFS$, 
and let $D = (D^-, D^+)$ be fixed decisions. 
The \emph{fixed decision AF} $\AFD$ is given by 
\begin{itemize}
\item $\ArgsD = \ArgsF$, 
\item $\attacksD \, = \, \left( \attacksF \cup \{ (\arga_{i, j}, \arga_{i, j})~:~(i, j) \in D^- \} \right)
\setminus \\
\text{ } \qquad\;\; \{ (\arga_{k, l}, \arga_{i, j})~:~(i, j) \in D^+, (k, l) \in \M \times \J \}$.
\end{itemize}
\end{definition}

Fixed decisions thus result in the respective arguments being self-attacked or unattacked. 
This yields the following. 

\begin{theorem}
\label{thm:fixed}
Let $S$ be a schedule, $\AFF$ the feasibility AF, possibly underlying a given optimality AF $\AFS$, $\argsE \approx S$, $D$ a fixed decision, $\AFD$ the fixed decision AF. 
Then $S$ is feasible and satisfies $D$ iff $\argsE$ is stable in $\AFD$. 
\end{theorem}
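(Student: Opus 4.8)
The plan is to mirror the structure of the proof of Theorem~\ref{thm:optimality}, splitting the biconditional into two directions and in each direction reducing the claim about $\AFD$ back to Theorem~\ref{thm:feasibility} (applied to the underlying $\AFF$) together with a direct inspection of how the added self-attacks and removed attacks interact with stability. The key observation to establish first is that, because $D^- \cap D^+ = \emptyset$, the self-attacks added for negative decisions and the attack-removals performed for positive decisions never act on the same argument, so the two modifications can be analysed independently.

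For the ``only if'' direction, I would start from $\argsE \approx S$ with $S$ feasible and satisfying $D = (D^-,D^+)$, and show $\argsE$ is stable in $\AFD$. Conflict-freeness: since $S$ is feasible, by Theorem~\ref{thm:feasibility} $\argsE$ is conflict-free in $\AFF$, so the only new conflicts in $\AFD$ could come from the added self-attacks $\arga_{i,j}\attacksD\arga_{i,j}$ for $(i,j)\in D^-$; but $S$ satisfies $D^-$, so $(i,j)\in D^-$ implies $x_{i,j}=0$, i.e. $\arga_{i,j}\notin\argsE$, so no self-attacked argument lies in $\argsE$. (Removing attacks cannot create conflicts.) Coverage: take any $\argb = \arga_{k,l}\notin\argsE$. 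If the attack on $\arga_{k,l}$ from the unique $\arga_{i,l}\in\argsE$ (existing in $\AFF$ by feasibility) survives in $\AFD$, we are done. The only way it is removed is if $(k,l)\in D^+$; but then $S$ satisfies $D^+$ forces $x_{k,l}=1$, i.e. $\arga_{k,l}\in\argsE$, contradicting $\arga_{k,l}\notin\argsE$. Hence every argument outside $\argsE$ is still attacked, and $\argsE$ is stable in $\AFD$. Note that the attacks removed for PEP/SEP in an underlying $\AFS$ are irrelevant here because $\AFD$ is built on the underlying $\AFF$, as stated in Definition~\ref{defn:fixed}.

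For the ``if'' direction, assume $\argsE$ is stable in $\AFD$ and derive that $S$ is feasible and satisfies $D$. First I would argue $\argsE$ is stable in $\AFF$: conflict-freeness in $\AFD$ implies no $\arga_{i,j}\in\argsE$ is self-attacked, so in particular for every $(i,j)\in D^-$ we get $\arga_{i,j}\notin\argsE$, i.e. $x_{i,j}=0$, so $S$ satisfies $D^-$; moreover since $\AFD$ only adds self-attacks and deletes attacks relative to $\AFF$, conflict-freeness transfers back to $\AFF$. For coverage in $\AFF$: given $\arga_{k,l}\notin\argsE$, stability in $\AFD$ gives some $\arga\in\argsE$ with $\arga\attacksD\arga_{k,l}$; since $\attacksD\subseteq\,\attacksF\cup\{\text{self-attacks}\}$ and $\arga\ne\arga_{k,l}$, this attack is already in $\attacksF$, so $\argsE$ attacks $\arga_{k,l}$ in $\AFF$ too. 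Thus $\argsE$ is stable in $\AFF$, and by Theorem~\ref{thm:feasibility} $S$ is feasible. It remains to check $S$ satisfies $D^+$: fix $(i,j)\in D^+$; by Definition~\ref{defn:fixed} the argument $\arga_{i,j}$ has no incoming attacks in $\AFD$, so a stable (hence conflict-free, hence non-self-attacking-containing) $\argsE$ can never attack $\arga_{i,j}$, forcing $\arga_{i,j}\in\argsE$ by stability, i.e. $x_{i,j}=1$.

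The main obstacle is the bookkeeping around the positive decisions: one must be careful that the condition $\nexists (i,j),(k,j)\in D^+$ with $i\ne k$ is what guarantees the resulting AF still has stable extensions corresponding to schedules (two conflicting unattacked arguments $\arga_{i,j},\arga_{k,j}$ could otherwise force a conflict-free extension to violate feasibility for job $j$), and that deleting \emph{all} incoming attacks on $\arga_{i,j}$ — including the symmetric partner attack $\arga_{i,j}\attacksF\arga_{k,j}$'s reverse — does not accidentally leave some $\arga_{k,j}$ ($k\ne i$) uncovered; here feasibility of $S$ plus $S$ satisfying $D^+$ resolves it, since then $\arga_{i,j}\in\argsE$ still attacks $\arga_{k,j}$ (that outgoing attack is not removed, only attacks \emph{into} $\arga_{i,j}$ are). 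I would make sure the write-up states this asymmetry of Definition~\ref{defn:fixed} explicitly.
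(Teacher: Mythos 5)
Your proposal is correct and follows essentially the same route as the paper's proof: both directions reduce the feasibility part to Theorem~\ref{thm:feasibility} on the underlying $\AFF$, handle $D^-$ via the added self-attacks (a self-attacking argument cannot lie in a conflict-free $\argsE$) and $D^+$ via the now-unattacked arguments (which a stable $\argsE$ must contain). Your write-up is in fact more explicit than the paper's about why stability transfers between $\AFD$ and $\AFF$ — in particular about the role of the condition $\nexists (i,j),(k,j)\in D^+$ with $i\neq k$ and the asymmetry of the attack removals — where the paper only says ``by definition of $D$''.
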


\begin{proof}
If $S$ is feasible, then $\argsE$ is stable in $\AFF$, by Theorem \ref{thm:feasibility}.
So if $S$ satisfies $D$, then no $\arga_{i, j} \in \argsE$ attacks itself in $\AFD$, so $\argsE$ is conflict-free in $\AFD$. 
Furthermore, since $S$ satisfies $D$, 
we know that any $\arga_{i, j} \in \argsE$ with $(i, j) \in D^+$ attacks all $\arga_{k, j} \not\in \argsE$ with $k \neq i$. 
Hence, $\argsE$ is stable in $\AFD$. 

If $\argsE$ is stable in $\AFD$, then $\argsE$ is stable in $\AFF$ too, by definition of $D$, so $S$ is feasible, by Theorem \ref{thm:feasibility}.
Also, $\argsE$ being conflict-free in $\AFD$ implies that $\arga_{i, j} \not\in \argsE~\forall (i, j) \in \M' \times \J'$, 
so that $x_{i, j} = 0~\forall (i, j) \in \M' \times \J'$, whence $S$ satisfies $D^-$. 
Also, $\argsE$ must contain all the unattacked arguments in $\AFD$. 
Since $(i, j) \in D^+$ implies that $\arga_{i, j}$ is unattacked in $\AFD$, 
it also implies $\arga_{i, j} \in \argsE$, whence $x_{i, j} = 1$, as $\argsE \approx S$. 
That is, $S$ satisfies $D^+$ too, and so satisfies $D$.
\end{proof}

\begin{example}
\label{ex:fixed}
In Example \ref{ex:feasibility}, let $D^- = \{ (2, 2) \}$ and $D^+ = \{ (1, 1) \}$, 
e.g.\ nurse 2 cannot do job 2 and nurse 1 must do job 1.
Then $D = (D^-, D^+)$.
The fixed decision AF $\AFD$ is depicted in Figure \ref{fig:graphs}c).
It has a unique stable extension $\{ \arga_{1, 1}, \arga_{1, 2} \}$,
which corresponds to the unique feasible schedule satisfying $D$, in which nurse 1 is assigned both jobs.
\end{example}

Clearly, modeling fixed decisions is polynomial:

\begin{lemma}
\label{lemma:fixed polynomial}
Given a schedule $S$, the fixed decision AF $\AFD$ can be constructed in $O(n^2m^2)$ time.
Verifying whether the extension $\argsE\subseteq\ArgsD$, such that $\argsE \approx S$, is stable can be done in $O(n^2m^2)$ time. 
\end{lemma}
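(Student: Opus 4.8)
The plan is to follow the template of Lemmas \ref{lemma:feasibility polynomial} and \ref{lemma:optimality polynomial}: reduce to the feasibility AF, account for the bounded number of edge insertions and deletions that Definition \ref{defn:fixed} prescribes, and then revisit the stability test with its one new ingredient, self-loops.

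\textbf{Construction.} I would first build the underlying feasibility AF $\AFF$ in $O(nm^2)$ time by Lemma \ref{lemma:feasibility polynomial}, storing it with adjacency lists (it has $O(nm)$ arguments and $O(nm^2)$ attacks); if an optimality AF $\AFS$ is the starting point, Definition \ref{defn:fixed} still operates on the attacks of the underlying $\AFF$, and building $\AFS$ itself costs only $O(nm^2)$ by Lemma \ref{lemma:optimality polynomial}. I would then add, for each $(i,j) \in D^-$, the self-attack $(\arga_{i,j}, \arga_{i,j})$; since $|D^-| \leqslant nm$ this is $O(nm)$. Next, for each $(i,j) \in D^+$, I would delete every attack targeting $\arga_{i,j}$; iterating over the set $\{ (\arga_{k,l}, \arga_{i,j}) : (i,j) \in D^+, (k,l) \in \M \times \J \}$ from Definition \ref{defn:fixed} costs $O(|D^+| \cdot nm)$, and since the well-formedness condition of Definition \ref{defn:fd} forces $|D^+| \leqslant n$ (at most one machine per job), this is $O(n^2 m)$; checking $D^- \cap D^+ = \emptyset$ and the no-two-machines condition is no costlier. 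All of these are dominated by $O(n^2m^2)$.

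\textbf{Verification.} To decide whether $\argsE$ with $\argsE \approx S$ is stable in $\AFD$, I would, as in Lemma \ref{lemma:feasibility polynomial}, (i) test conflict-freeness and (ii) test that $\argsE$ attacks every argument of $\ArgsD \setminus \argsE$. Step (ii) is $O(n^2m^2)$ exactly as before. Step (i) splits into checking, for every ordered pair of distinct arguments of $\argsE$, whether one attacks the other with respect to $\attacksD$ ($O(n^2m^2)$), and additionally checking each single argument of $\argsE$ for a self-attack ($O(nm)$ extra), since negative fixed decisions introduce self-loops in $\AFD$. Hence verification is $O(n^2m^2)$.

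The asymptotics are routine; the only point needing care is that the conflict-freeness test must explicitly inspect self-loops, which the feasibility-AF verification never had to do. Omitting this would wrongly certify as stable an extension containing some $\arga_{i,j}$ with $(i,j) \in D^-$ yet $x_{i,j}=1$, i.e.\ a schedule violating $D^-$; by Theorem \ref{thm:fixed} the self-loop check is exactly what separates schedules satisfying $D^-$ from those violating it, so no further change to the feasibility-AF argument is needed.
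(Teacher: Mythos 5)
Your proposal is correct and follows essentially the same route as the paper's proof: build the feasibility AF in $O(nm^2)$ time, add the $O(nm)$ self-attacks for $D^-$, remove the attacks onto $D^+$-targets within $O(n^2m^2)$, and then verify stability by the same two-phase conflict-freeness and coverage check in $O(n^2m^2)$. Your extra observations (the bound $|D^+|\leqslant n$ from the well-formedness condition, and the explicit inclusion of self-loops in the conflict-freeness test) are harmless refinements of the same argument rather than a different approach.
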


\begin{proof}
By Lemma \ref{lemma:feasibility polynomial}, constructing the feasibility AF $\AFF$ requires $O(nm^2)$ time.
To derive the fixed decision AF $\AFD$, we add the attack $\arga_{i,j}\attacksD\arga_{i,j}$ $\forall(i,j)\in D^-$ and we remove the attack $\arga_{k,l}\attacks_F\arga_{i,j}$ from $\AFF$, $\forall (i,j)\in D^+$ and $(k,l)\in\M\times\J$.
Specifically, we need $O(nm)$ time for attack additions and $O(n^2m^2)$ for attack removals.

We can determine whether an extension $E\in\ArgsD$ is stable in $O(n^2m^2)$ time, similarly to the case of the feasibility framework $\AFF$.
In particular, we check if $E$ is conflict-free in $O(n^2m^2)$ time, and if $E$ defends every argument of $\ArgsD\setminus E$ in $O(n^2m^2)$ time. 
\end{proof}

Efficiently capturing feasibility constraints, optimality conditions and fixed decisions in AA allows us to provide tractable explanations refuting or certifying `goodness' of the schedules generated by the optimization solver or else proposed by the user. 
We do this next.

\section{Explanations}
\label{sec:expl}


We here formally define argumentative explanations as to why a given schedule is not `good'. 
We define two types of explanations: 
in terms of \emph{attacks} and \emph{non-attacks} in AFs. 

At a high-level, if a given schedule $S$ is not feasible, efficient or violates fixed decisions, 
the formal argumentative explanations allow to identify which assignments, represented by arguments, are responsible. 
In addition, existence or non-existence of attacks with respect to the identified arguments determines the reasons as represented by the attack relationships of different AFs: feasibility, optimality and fixed decisions. 
Thus identified assignments and reasons allow to instantiate argumentative explanations with templated natural language generated (NLG) explanations to be given to the user, 
e.g.\ as in \cite{Zhong.et.al:2019}. 
Further, if $S$ is `good' as far as AFs can model, 
an NLG explanation can be given relating to the properties satisfied by $S$.

\subsection{Explanations via Attacks}
\label{subsec:att}

\emph{Explanations via attacks} concern schedule feasibility, pairwise exchanges and negative fixed decisions. 
We focus on attacks among arguments in the extension $\argsE$ corresponding to a given schedule $S$. 
These attacks make $\argsE$ non-conflict-free and hence not stable, 
and arise whenever $S$ is not feasible due to some job assigned to more than one machine, 
or violates either PEP or negative fixed decisions. 
We exploit this to define argumentative explanations for why $S$ is not feasible, not efficient or violates fixed decisions.

\begin{definition}
\label{defn:att}
Let $S$ be a schedule, $\argsE \approx S$ and $\AF \in \{ \AFF, \AFS, \AFD \}$. 
We say that \emph{an attack $\arga \attacks \argb$ with $\arga, \argb \in \argsE$ explains why $S$}:
\begin{itemize}
\item \emph{is not feasible}, when $(\arga, \argb) \in \, \attacksF$;
\item \emph{is not efficient}, when $(\arga, \argb) \in \, \attacksS \setminus \attacksF$;
\item \emph{violates fixed decisions}, when $(\arga, \argb) \in \, \attacksD \setminus \attacksF$. 
\end{itemize} 
\end{definition}

So, if a given schedule $S$ 
\begin{enumerate*}[(i)]
\item is either not feasible due to some job assigned to more than one machine ($\attacksF$), 
\item or is not efficient due to some improving pairwise exchange ($\attacksS \setminus \attacksF$), 
\item or assigns some job contrary to a negative fixed decision ($\attacksD \setminus \attacksF$), 
\end{enumerate*}
then the particular reason together with the relevant assignments is indicated. 
This allows to give an NLG explanation via template
\begin{quote}
$S$ $\{$
\begin{itemize*}
\item is not feasible; 
\item is not efficient; 
\item violates fixed decisions
\end{itemize*}$\}$ because 
attack $\arga_{i, j} \attacks \arga_{k, l}$ shows that $\{$
\begin{itemize*}
\item two machines $i$ and $k$ are assigned the same job $j = l$; 
\item $S$ can be improved by swapping jobs $j$ and $l$ on machines $i$ and $k$; 
\item job $i=k$ is assigned to machine $j=l$ contrary to the negative fixed decision $(i, j)$
\end{itemize*}$\}$
\end{quote}
with cases chosen and indices $i, j, k, l$ instantiated accordingly. 
We exemplify this in the three settings next. 

\begin{example}
\label{ex:att feasible}
In Example \ref{ex:feasibility}, let schedule $S$ be given by $x_{1, 1} = x_{2, 1} = 1$ and $x_{1, 2} = x_{2, 2} = 0$.
$S$ is not feasible, because job 1 is assigned to 2 machines (e.g.\ nurses).
We have $S \approx \argsE = \{ \arga_{1, 1}, \arga_{2, 1} \}$.
Any of the attacks $\arga_{1, 1} \attacks \arga_{2, 1}$ and $\arga_{2, 1} \attacks \arga_{1, 1}$
in the feasibility AF $\AFF = \AF$ explains why $S$ is not feasible:
see Figure \ref{fig:graphs}a).
One NLG explanation is:
$S$ is not feasible because
attack $\arga_{1, 1} \attacks \arga_{2, 1}$ shows that
two machines (e.g.\ nurses) $1$ and $2$ are assigned the same job $1$.
\end{example}

\begin{example}
\label{ex:att optimal}
In Example \ref{ex:optimality}, 
the attack $\arga_{2, 3} \attacks \arga_{1, 2}$ in the optimality AF $\AFS = \AF$ 
explains why $S \approx \{ \arga_{1, 1}, \arga_{1, 2}, \arga_{2, 3} \}$ 
is not efficient, particularly as it violates PEP:
see Figure \ref{fig:graphs}b).
The NLG explanation is:
$S$ is not efficient because 
attack $\arga_{2, 3} \attacks \arga_{1, 2}$ shows that 
$S$ can be improved by swapping jobs $3$ and $2$ between machines (e.g.\ nurses) $2$ and $1$.
\end{example}

\begin{example}
\label{ex:att fixed}
In Example \ref{ex:fixed}, the self-attack $\arga_{2, 2} \attacks \arga_{2, 2}$
in the fixed decision AF $\AFD = \AF$
explains why $S \approx \{ \arga_{1, 1}, \arga_{2, 2} \}$ violates the negative fixed decision $D^-$ represented by the self-attack:
see Figure \ref{fig:graphs}c).
The NLG explanation is:
$S$ violates fixed decisions because
attack $\arga_{2, 2} \attacks \arga_{2, 2}$ shows that
job $2$ is assigned to machine (e.g.\ nurse) $2$ contrary to the negative fixed decision $(2, 2)$.
\end{example}

\subsection{Explanations via Non-Attacks}
\label{subsec:natt}

\emph{Explanations via non-attacks} concern schedule feasibility, single exchanges and positive fixed decisions. 
We here focus on arguments outside the extension which are not attacked by the extension $\argsE$ corresponding to a given schedule $S$. 
Such non-attacks result in $\argsE$ being not stable, 
and arise whenever $S$ is not feasible due to some unassigned job, 
or violates either SEP or positive fixed decisions. 
As in the case of explanations via attacks, we exploit this to define argumentative explanations for why $S$ is not feasible, not efficient or violates fixed decisions. 

\begin{definition}
\label{defn:natt}
Let $S$ be a schedule, $\argsE \approx S$ and $\AF \in \{ \AFF, \AFS, \AFD \}$. 
We say that \emph{a non-attack $\argsE \nattacks \argb$ with $\argb \not\in \argsE$ explains why $S$}:
\begin{itemize}
\item \emph{is not feasible}, when $\attacks \, = \attacksF$;
\item \emph{is not efficient}, when $\attacks \, = \attacksS$ and $\argb \attacksS \argsE$;
\item \emph{violates fixed decisions}, when $\attacks \, = \, \attacksD$ and $\argb$ is unattacked. 
\end{itemize} 
\end{definition}

As with explanations via attacks, if a given schedule is not `good', 
then the particular reason together with the relevant assignments is indicated. 
This allows to give an NLG explanation via template
\begin{quote}
$S$ $\{$
\begin{itemize*}
\item is not feasible; 
\item is not efficient; 
\item violates fixed decisions
\end{itemize*}$\}$ because 
non-attack $\argsE \nattacks \arga_{k, l}$ shows that $\{$
\begin{itemize*}
\item job $l$ is not scheduled; 
\item $S$ can be improved by moving job $l$ to machine $k$; 
\item job $l$ is not assigned to machine $k$ contrary to the positive fixed decision $(k, l)$
\end{itemize*}$\}$
\end{quote}
with cases chosen and indices $i, j, k, l$ instantiated accordingly. 
We exemplify this in the three settings next. 

\begin{example}
\label{ex:natt feasible}
In Example \ref{ex:feasibility}, let schedule $S$ be given by $x_{1, 1} = 1$ and $x_{1, 2} = x_{2, 1} = x_{2, 2} = 0$.
We have $S \approx \argsE = \{ \arga_{1, 1} \}$.
Both non-attacks $\argsE \nattacks \arga_{1, 2}$ and $\argsE \nattacks \arga_{2, 2}$
in the feasibility AF $\AFF = \AF$
explain why $S$ is not feasible:
see Figure \ref{fig:graphs}d).
One NLG explanation is:
$S$ is not feasible because
non-attack $\argsE \nattacks \arga_{1, 2}$ shows that
job $2$ is not scheduled.
\end{example}

\begin{example}
\label{ex:natt optimal}
In Example \ref{ex:optimality},
the non-attack $\argsE \nattacks \arga_{2, 1}$ in the optimality AF $\AFS = \AF$
explains why $S$ is not efficient, as it violates SEP:
see Figure \ref{fig:graphs}b).
The NLG explanation is:
$S$ is not efficient because
non-attack $\argsE \nattacks \arga_{2, 1}$ shows that the longest completion (e.g.\ shift) time can be reduced by moving job $1$ to machine (e.g.\ nurse) $2$.
\end{example}

\begin{example}
\label{ex:natt fixed}
In Example \ref{ex:fixed}, let schedule $S$ be given by $x_{1, 2} = x_{2, 1} = 1$, $x_{1, 1} = x_{2, 2} = 0$.
Then $\{ \arga_{1, 2}, \arga_{2, 1} \} = \argsE \approx S$.
In the fixed decision AF $\AFD = \AF$, 
the non-attack $\argsE \nattacks \arga_{1, 1}$
explains why $S$ violates the positive fixed decision $D^+$ represented by the unattacked argument $\arga_{1, 1}$:
see Figure \ref{fig:graphs}e).
The NLG explanation is:
$S$ violates fixed decisions because
non-attack $\argsE \nattacks \arga_{1, 1}$ shows that
job $1$ is not assigned to machine (e.g.\ nurse) $1$ contrary to the positive fixed decision $(1, 1)$.
\end{example}

\subsection{Desiderata for \argopt}
\label{subsec:properties}

We now show that our argumentative explanations meet the desiderata stated in Section \ref{sec:problem}.

\begin{theorem}
\label{thm:expl}
Let $S$ be a schedule, $\argsE \approx S$ and $\AF \in \{ \AFF, \AFS, \AFD \}$. 
\begin{itemize}
\item $S$ is not feasible / is not efficient / violates fixed decisions, respectively, iff:
either there is an attack $\arga \attacks \argb$ with $\arga, \argb \in \argsE$ 
or there is a non-attack $\argsE \nattacks \argb$ with $\argb \not\in \argsE$, 
explaining why $S$ is not feasible / is not efficient / violates fixed decisions, respectively. 
\item Explaining why $S$ is not feasible / is not efficient / violates fixed decisions can be done in $O(n^2 m^2)$ time. 
\item Each explanation is polynomial in the size of $S$.
\end{itemize}
\end{theorem}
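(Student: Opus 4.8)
The plan is to prove the three bullet points of Theorem~\ref{thm:expl} in turn, leaning heavily on the characterization theorems already established.

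\textbf{Soundness and completeness.}
I would prove the first bullet by unfolding the definition of stability and matching each failure mode to Definitions~\ref{defn:att} and~\ref{defn:natt}. Fix $\AF \in \{ \AFF, \AFS, \AFD \}$ and $\argsE \approx S$. By Theorems~\ref{thm:feasibility}, \ref{thm:optimality} and~\ref{thm:fixed}, respectively, $S$ is feasible / efficient / feasible-and-satisfying-$D$ iff $\argsE$ is stable in the corresponding AF. So $S$ is \emph{not} `good' iff $\argsE$ is \emph{not} stable, i.e.\ iff $\argsE$ is not conflict-free \emph{or} $\argsE$ fails to attack some $\argb \notin \argsE$. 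In the first case there exist $\arga, \argb \in \argsE$ with $\arga \attacks \argb$; I then need to check that such an attack falls into exactly one of the three cases of Definition~\ref{defn:att} depending on which AF we are in --- for $\AFF$ it is in $\attacksF$; for $\AFS$, either it is in $\attacksF$ (then $S$ is not feasible) or in $\attacksS \setminus \attacksF$ (a PEP-violating added attack); for $\AFD$ similarly with $\attacksD \setminus \attacksF$ being a negative-fixed-decision self-attack. In the second case there is $\argb \notin \argsE$ with $\argsE \nattacks \argb$; here I verify the side conditions of Definition~\ref{defn:natt} actually hold: for $\AFF$ there is nothing extra; for $\AFS$ I must argue $\argb \attacksS \argsE$ (since $\argb = \arga_{i,j}$ with $j$ unassigned in $\argsE$ would mean $\argsE$ also fails conflict-freeness or $j$ is assigned elsewhere --- the relevant unattacked argument is the one on a machine $i'$ that could take a critical job, and by construction of $\attacksS$ the removed attack was asymmetric so the reverse attack $\argb \attacksS \argsE$ survives); for $\AFD$ I must argue $\argb$ is unattacked, which holds because a non-self-attacked argument outside a feasible-looking extension that is not attacked by the extension must be an argument whose incoming attacks were all deleted, i.e.\ a positive fixed decision. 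Conversely, if such an attack or non-attack exists then $\argsE$ is not stable, so $S$ is not `good'. The main obstacle here is the $\AFS$/$\AFD$ case analysis: I must be careful that every witness of non-stability is covered by \emph{some} clause of the two definitions, and in particular that in $\AFS$ a non-attacked $\argb$ really does attack back (so the non-attack clause for `not efficient' applies) rather than falling through the cracks --- this uses the specific asymmetric-to-symmetric structure of the SEP modification noted in the proof of Theorem~\ref{thm:optimality}.

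\textbf{Computational tractability.}
For the second bullet I would argue that deciding whether $S$ is `good' reduces to verifying stability of $\argsE$ in the appropriate AF, and then producing an explanation is a by-product of that verification. Concretely: construct the relevant AF ($O(nm^2)$ for $\AFF$ and $\AFS$, $O(n^2m^2)$ for $\AFD$ by Lemmas~\ref{lemma:feasibility polynomial}, \ref{lemma:optimality polynomial}, \ref{lemma:fixed polynomial}); then run the stability check in $O(n^2m^2)$ as in those same lemmas. The conflict-freeness scan over pairs in $\argsE$ either finds an attacking pair --- yielding an explanation via attacks and, by inspecting which part of the attack relation it came from, its type --- or certifies conflict-freeness; the coverage scan over $\argb \in \Args \setminus \argsE$ either finds an unattacked $\argb$ --- yielding an explanation via non-attacks, with its type read off from which AF and whether $\argb$ attacks back / is unattacked --- or certifies stability. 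All of this is within $O(n^2m^2)$, giving the stated bound.

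\textbf{Cognitive tractability.}
For the third bullet, I would observe that every explanation produced is either a single attack $\arga \attacks \argb$ or a single non-attack $\argsE \nattacks \argb$, together with the at most two (for an attack) or one (for a non-attack) arguments involved, each argument being an index pair $(i,j)$. Hence an explanation has constant size in terms of the number of arguments it names, and certainly is polynomial (in fact linear, or even constant) in the size of $S$, which has $nm$ entries. I would add a sentence noting that the NLG templates of Sections~\ref{subsec:att}--\ref{subsec:natt} instantiate to strings of length bounded by a constant plus the encoding length of the indices, hence polynomial in $|S|$ as required. There is no real obstacle here beyond stating it cleanly.

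\textbf{Remark on the plan.}
The bulk of the genuine work is the first bullet; the second and third are essentially bookkeeping on top of the polynomial-verification lemmas and the observation that explanations are extracted directly from the stability-verification procedure. I expect the write-up to spend most of its length on the exhaustive-cases argument for soundness and completeness, and to dispatch tractability and succinctness in a few lines each by citing Lemmas~\ref{lemma:feasibility polynomial}, \ref{lemma:optimality polynomial} and~\ref{lemma:fixed polynomial}.
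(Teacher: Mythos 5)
Your proposal is correct and follows essentially the same route as the paper: the first bullet from Theorems~\ref{thm:feasibility}, \ref{thm:optimality}, \ref{thm:fixed} together with Definitions~\ref{defn:att} and~\ref{defn:natt}, the second from Lemmas~\ref{lemma:feasibility polynomial}--\ref{lemma:fixed polynomial}, and the third from the bounded size of a single (non-)attack. The paper's own proof is in fact just a citation of these results, so your more detailed case analysis for the first bullet (in particular checking the side conditions $\argb \attacksS \argsE$ and ``$\argb$ unattacked'') is a strict elaboration of, not a departure from, its argument.
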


\begin{proof}
The first claim follows from Definitions \ref{defn:feasibility af}, \ref{defn:optimality af}, \ref{defn:att} and \ref{defn:natt}, 
Theorems \ref{thm:feasibility}, \ref{thm:optimality} and \ref{thm:fixed}. 
The second claim follows from Lemmas \ref{lemma:feasibility polynomial}, \ref{lemma:optimality polynomial} and \ref{lemma:fixed polynomial}. 
The third claim follows from the fact that mapping schedules and extensions one to another (Definition \ref{defn:corresponding}) is linear in $O(nm)$. 
\end{proof}

This result shows that \argopt\ meets the desiderata 
of soundness and completeness, computational and cognitive tractability. 
Theorem \ref{thm:expl} also implies that we can provide explanations if and when the given schedule $S$ is `good'.
Indeed, if $S$ is feasible / efficient / satisfies fixed decisions, 
then the corresponding extension $\argsE$ is a \emph{certificate} in the feasibility / optimality / fixed decision AF to the `goodness' of $S$. 
This certificate can help the user understand the accompanying NLG explanations as to why the schedule is `good'. 

For instance, consider the fixed decision AF and its unique stable extension $\argsE = \{ \arga_{1, 1}, \arga_{1, 2} \}$ as in Figure \ref{fig:graphs}c). 
$\argsE$ certifies that schedule $S \approx \argsE$ where nurse 1 does both jobs 1 and 2 is feasible and meets the fixed decisions: 
nurse 1 is assigned job 1 because of e.g.~a manager request, 
as per the positive fixed decision $(1, 1)$ represented by the unattacked argument $\arga_{1, 1}$; 
similarly, nurse 1 is assigned job 2 because e.g.~nurse 2 is unqualified, 
as per the negative fixed decision $(2, 2)$ represented by the self-attacking argument $\arga_{2, 2}$. 

\section{Related Work}
\label{sec:related}


To the best of our knowledge, there are no works concerning either explainable scheduling or integrating argumentation and optimization to explain the latter. 
Some preliminary works consider explainable planning, e.g.~\cite{Fox:Long:Magazzeni:2017}, 
which is generally different from scheduling. 
Argumentation can also be used for making and explaining decisions, 
e.g.~\cite{Amgoud:Prade:2009,Zeng.et.al:2018-AAMAS}, 
but mainly in multicriteria decision making which is a different setting from ours. 
Integration-wise, abduction is used for scheduling in e.g.~\cite{AppAI2001,AppInt1991}, 
but not for the purpose of explaining. 
Optimization can be used to implement argumentation solvers, 
e.g.~by mapping AFs to constraint satisfaction problems \cite{Bistarelli:Santini:2010}, 
which is opposite to using argumentation to supplement optimization. 

Argumentation-based explanations in the literature are by and large formalized as (sub-)graphs/trees within AFs, see e.g.~\cite{Garcia:Chesnevar:Rotstein:Simari:2013,Fan:Toni:2015,Cyras:Fan:Schulz:Toni:2018,Rago:Cocarascu:Toni:2018,Zeng.et.al:2018-AAMAS}. 
There the user needs to follow the reasoning chains represented by the graphs to deduce the reasons for why a particular argument (representing e.g.~a statement, a decision, a recommendation) is acceptable. 
In contrast, our argumentative explanations consist of at most two decision points (arguments) and the associated relationship ((non-)attack). 
They can thus be seen as paths of length 1 that pinpoint exactly which decisions violate which properties for a given schedule and optimization considerations, 
without the need to follow possibly lengthy chains of arguments. 
Our explanations are thus cognitively tractable. 
They can also be efficiently generated and afford natural language interpretations. 

Other graph-based models could be used to explain decisions of, in particular, machine learning classifiers, 
e.g.~ordered decision diagrams as in \cite{Shih:Choi:Darwiche:2018} 
and decision trees as in \cite{Frosst:Hinton:2017}. 
Moreover, natural language explanations could also be used for explanations, 
e.g.~via counterfactual statements for machine learning predictions \cite{Sokol:Flach:2018}.
We leave the study of relationships and formal comparison to such approaches for future work.

\section{Conclusions and Future Work}
\label{sec:conclusions}

This paper introduces a paradigm for clearly explaining to a user why a proposed schedule 
is `good' or not.
We propose abstract argumentation as an intermediate layer between the user and the optimization solver for defining and extracting explanations. 
In the makespan scheduling problem, we capture three essential dimensions---feasibility, efficiency, fixed decisions---and capture them with argumentation frameworks.
These proposed argumentative explanations justify whether and why a given schedule is `good' in those dimensions. 
We also establish the soundness and completeness of argumentative explanations, 
prove that they can be efficiently extracted 
and show how argumentative explanations can give rise to natural language explanations. 

This work explicitly incorporates an example that assigns jobs to specific nurses. 
Each job is completed exactly once and the goal is that everyone gets to leave work as quickly as possible. 
Our \emph{fixed decision} setting also recognizes that some nurses have (or lack) particular skills and therefore certain nurses cannot be assigned certain jobs. 
But we could incorporate more modeling requirements into this framework, e.g.\ introduce constraints incorporating
contractual obligations such as a number of shifts per week or design the solution so that it is
more robust to uncertainty \cite{letsios2018exact}. 
Moreover, we have shown how to incorporate explanations for some necessary optimality conditions, 
but we could also develop intuitive explanations for other optimality concepts such as fractional relaxations or cutting planes. 
We leave these extensions for future work.

\subsubsection*{Acknowledgements}

The authors were funded by the EPSRC project 
\textbf{EP/P029558/1} ROAD2H, 
except for Dimitrios Letsios who was funded by the EPSRC project \textbf{EP/M028240/1} Uncertainty-Aware Planning and Scheduling in the Process Industries.

\noindent
\textbf{Data access statement}: 
No new data was collected in the course of this research.

\fontsize{9.5pt}{10.5pt}
\selectfont
\bibliography{references}
\bibliographystyle{aaai}

\end{document}